\DeclarePairedDelimiter\floor{\lfloor}{\rfloor}
\newtheorem{theorem}{Theorem}
\newtheorem{lemma}{Lemma}
\newtheorem{corollary}{Corollary}
\newtheorem{definition}{Definition}
\crefname{section}{Sec.}{Secs.}
\Crefname{section}{Section}{Sections}
\Crefname{table}{Table}{Tables}
\crefname{table}{Tab.}{Tabs.}
\title{GridShift: A Faster Mode-seeking Algorithm for Image Segmentation and Object Tracking}
\author[1]{Abhishek Kumar}
\author[1]{Oladayo S. Ajani}
\author[2]{Swagatam Das}
\author[1]{Rammohan Mallipeddi}
\affil[1]{Department of Artificial Intelligence, Kyungpook National University, Daegu 
37224, South Korea}
\affil[2]{Electronics and Communication Sciences Unit, Indian Statistical Institute Kolkata {700108}, India}
\begin{document}
\maketitle
%%%%%%%%% ABSTRACT
\begin{abstract}
In machine learning and computer vision, mean shift (MS) qualifies as one of the most popular mode-seeking algorithms used for clustering and image segmentation. It iteratively moves each data point to the weighted mean of its neighborhood data points. The computational cost required to find the neighbors of each data point is quadratic to the number of data points. Consequently, the vanilla MS appears to be very slow for large-scale datasets. To address this issue, we propose a mode-seeking algorithm called GridShift, with significant speedup and principally based on MS. To accelerate, GridShift employs a grid-based approach for neighbor search, which is linear in the number of data points. In addition, GridShift moves the active grid cells (grid cells associated with at least one data point) in place of data points towards the higher density, a step that provides more speedup. The runtime of GridShift is linear in the number of active grid cells and exponential in the number of features. Therefore, it is ideal for large-scale low-dimensional applications such as object tracking and image segmentation. Through extensive experiments, we showcase the superior performance of GridShift compared to other MS-based as well as state-of-the-art algorithms in terms of accuracy and runtime on benchmark datasets for image segmentation. Finally, we provide a new object-tracking algorithm based on GridShift and show promising results for object tracking compared to CamShift and meanshift++.
\end{abstract}

%%%%%%%%% BODY TEXT
\section{Introduction}
\label{sec:intro}
Mean Shift (MS) is a non-parametric, iterative mode-seeking algorithm for cluster analysis. The well established ability of MS to detect clusters effectively has placed it at the center of several computer vision applications such as unsupervised image and video segmentation~\cite{PARK2009970,Zhou2011,Tao,sylvain,paris}, object tracking~\cite{comaniciu,Leichter2010MeanST,Ning2012RobustMT}, and image processing~\cite{Barash2004ACF,Bigdeli2017DeepMP}. 
\begin{figure*}[!ht]
\centering
\includegraphics[width = 0.75\linewidth]{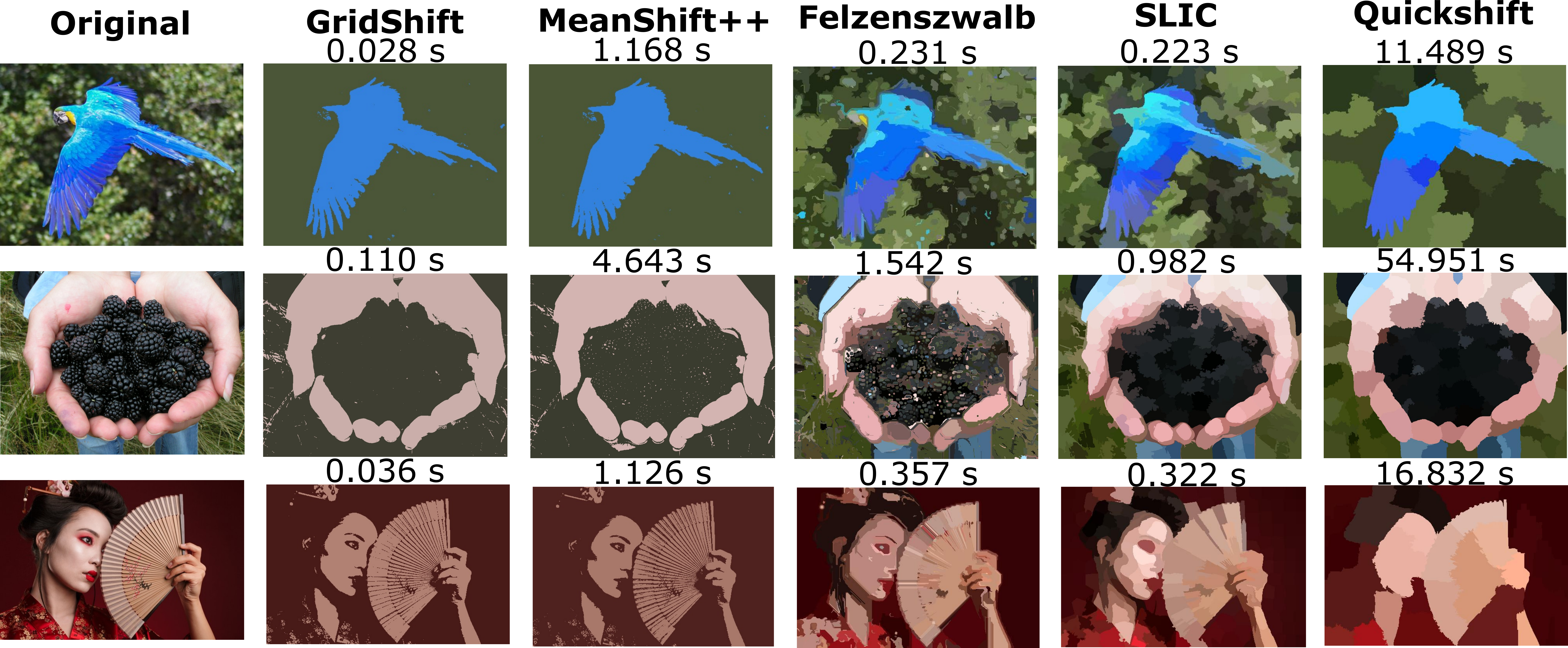}
\caption{\textit{Comparison of five algorithms on three baseline segmentation problems taken from \cite{lin2015microsoft}.} GS returns qualitatively good image segmentation results in all baseline images with lower runtime than other state-of-the-art algorithms: 40x, 10x, 8x, and 500x faster than MS++, Felzenszwalb, SLIC, and Quickshift, respectively.}
\label{fig:1}

\end{figure*}
\begin{figure*}[!ht]
    \centering
    \includegraphics[width = 0.75\linewidth]{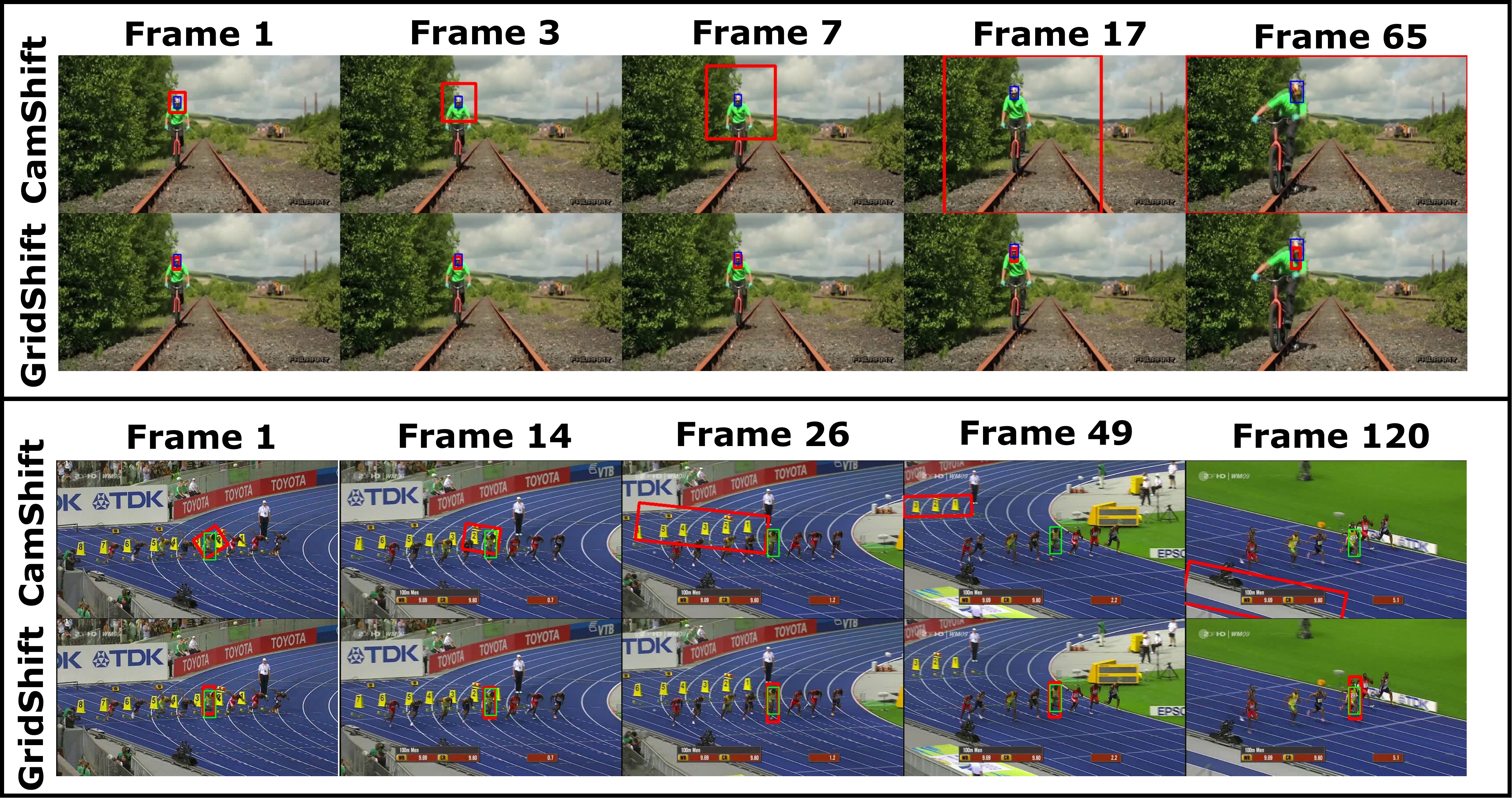}
    \caption{\textit{Comparison of GS and CS on two baseline object tracking problems taken from \cite{WuLimYang13}.} \textbf{Upper row}: we are tracking the cyclist's face (target object shown by blue colored box).  CS algorithm fails to track due to low lighting on the face and background with a high density of green color. On the other hand, GS tracks the face accurately in all the video frames despite these issues. \textbf{Last row}:  we are tracking player Bolt (target object shown by Green color) during the race. Again CS fails to follow Bolt because of his clothes color, which is yellow, and it is distracted by the other objects of yellow color. On the other hand, GS tracks the Bolt accurately without being affected by the other objects of the same color in all video frames.}
    \label{fig:2_new}

\end{figure*}
MS does not need the number of clusters to be specified beforehand. Moreover, it also makes no prior assumption on the probability distribution of the data points. %Due to its non-parametric nature (i.e., it does not require the number of clusters beforehand), as well as independence on the probability distribution of the associated data,  
Because of such attributes, MS is often preferred over other popular and application-specific algorithms such as $k$-meanss~\cite{ZHAO2018195,ISMKHAN2018402}, Spectral clustering~\cite{Huang}, Felzenszwalb~\cite{Felzenszwalb2004EfficientGI}, and simple linear iterative clustering (SLIC)~\cite{achanta2012slic}. However, the iterative process in MS is computationally expensive since the asymptotic or bounded behavior of each iteration is of the $O(n^{2})$ order. In other words, the computational complexity required for finding neighborhood data points for each point is quadratic to the number of data points. Despite its superior performance in image segmentation, this high computational cost limits the application of MS to high-resolution image segmentation. To address this issue, we put forth an extended variant of MS, named GridShift (GS), with a time complexity that is linear to the number of data points and exponential to the number of features (dimension of data points).

A popular color-based object tracking algorithm, CamShift (CS)~\cite{bradski1998computer,allen2004object}, was derived from the MS algorithm. Although it qualified as a popular unsupervised object tracking procedure, CS has a few major drawbacks which limit its applicability to real-world tracking problems. The main disadvantage is that it only utilized the peak of the back-projected probability distribution. Consequently, it cannot distinguish objects with similar colors and it suffers from tracking failure caused by similar colored objects around the searching window~\cite{jang2021meanshift++}. The other issue is that CS often utilizes only one reference histogram due to computational burden. As a result, changing appearance or lighting conditions affect the tracking performance of CS and it often fails to track the object. In this work, we develop a new object-tracking algorithm based on the GS algorithm by addressing these two issues.

In  MS++~\cite{jang2021meanshift++} and $\alpha$-MS++~\cite{park},  the data space is partitioned into grids, where each data point is associated with an appropriate grid cell. With an increase in iterations, data points shrink towards their nearest mode; therefore, the number of data points associated with an active grid cell is also increased~\cite{cheng1995mean}. The shifted location of all associated data points of a grid cell can be approximated as a single point at the weighted mean of their shifted location (termed centroid of that grid cell)~\cite{jang2021meanshift++, park}.  Thus, this motivates us to develop a new grid-based framework, GridShift,  different than MS++ and $\alpha$-MS++,  where we apply shifting steps of MS directly to approximated centroids of data points. Since multiple data points have the same centroid, we can reduce the computational cost by increasing iterations. Thus, the proposed algorithm is faster than MS++ and $\alpha$-MS++.  Due to the reduced computational time, GS may qualify as an attractive alternative to the existing techniques in computer vision applications because of the increasing resolution of collected data from cameras and sensors and the growing size of modern datasets.

Our major contributions are as follows:
\begin{itemize}
    \item We propose a faster mode-seeking algorithm GS, for unsupervised clustering of large-scale datasets with lower dimensions (which could be very well suited for applications such as image segmentation).
    \item To come out from the closed-loop iteration cycle, GS does not require any stopping criteria.
    \item Empirical analysis indicates that GS provides better or at least comparable clustering results to MS and MS++ with significantly faster computation speed.
    \item An image segmentation experiment on the well-accepted benchmark datasets suggests that GS may indeed outperform some of the popular state-of-the-art algorithms in terms of accuracy and efficiency-- yielding speedup of 40x and 40000x as compared to MS++ and MS respectively.
    \item We also present a new object tracking algorithm based on GS that gradually adapts to the changes in scenes and color distribution. Our experiments suggest that GS-based object tracking algorithm detects objects more accurately than MS++ and CS-based object tracking algorithm.
\end{itemize}
Before delving further into the details of GS, let us provide two warm-up examples to illustrate its potential in image segmentation and object tracking applications.\\
\mbox{~~~}\vspace{-3mm}\\
\textbf{1) A Motivating Example for Image Segmentation:} We utilized three baseline images taken from \cite{WuLimYang13} for unsupervised image segmentation. For comparative analysis, we consider the following algorithms as state-of-the-art: Felzenszwalb~\cite{Felzenszwalb2004EfficientGI}, Quickshift~\cite{Vedaldi2008QuickSA}, SLIC ($k$-means)~\cite{achanta2012slic}, and MS++~\cite{jang2021meanshift++}. We show the outcomes of all these algorithms in Figure \ref{fig:1}. As shown in Figure \ref{fig:1}, GS can produce a similar segmentation result as MS++, but with a $40$x speedup. As compared to other algorithms, GS results in better segmentation with faster computational speed. It is clear from this example that GS provides better segmentation results than other algorithms with lower runtime.\\
\mbox{~~~}\vspace{-3mm}\\
\textbf{2) A Motivating Example for Object Tracking:} For object tracking, we employ CS and GS. In Figure \ref{fig:2_new}, we show the tracking result of these algorithms on two benchmark problems taken from \cite{WuLimYang13}. It is seen from Figure \ref{fig:2_new} that GS is a more robust object tracking algorithm in a complex background environment, especially in large areas with a similar color. On the other hand, CS fails to track in such situations as its performance is highly dependent on the color of the object.
\begin{figure}[!ht]
    \centering
    \includegraphics[width=0.7\linewidth]{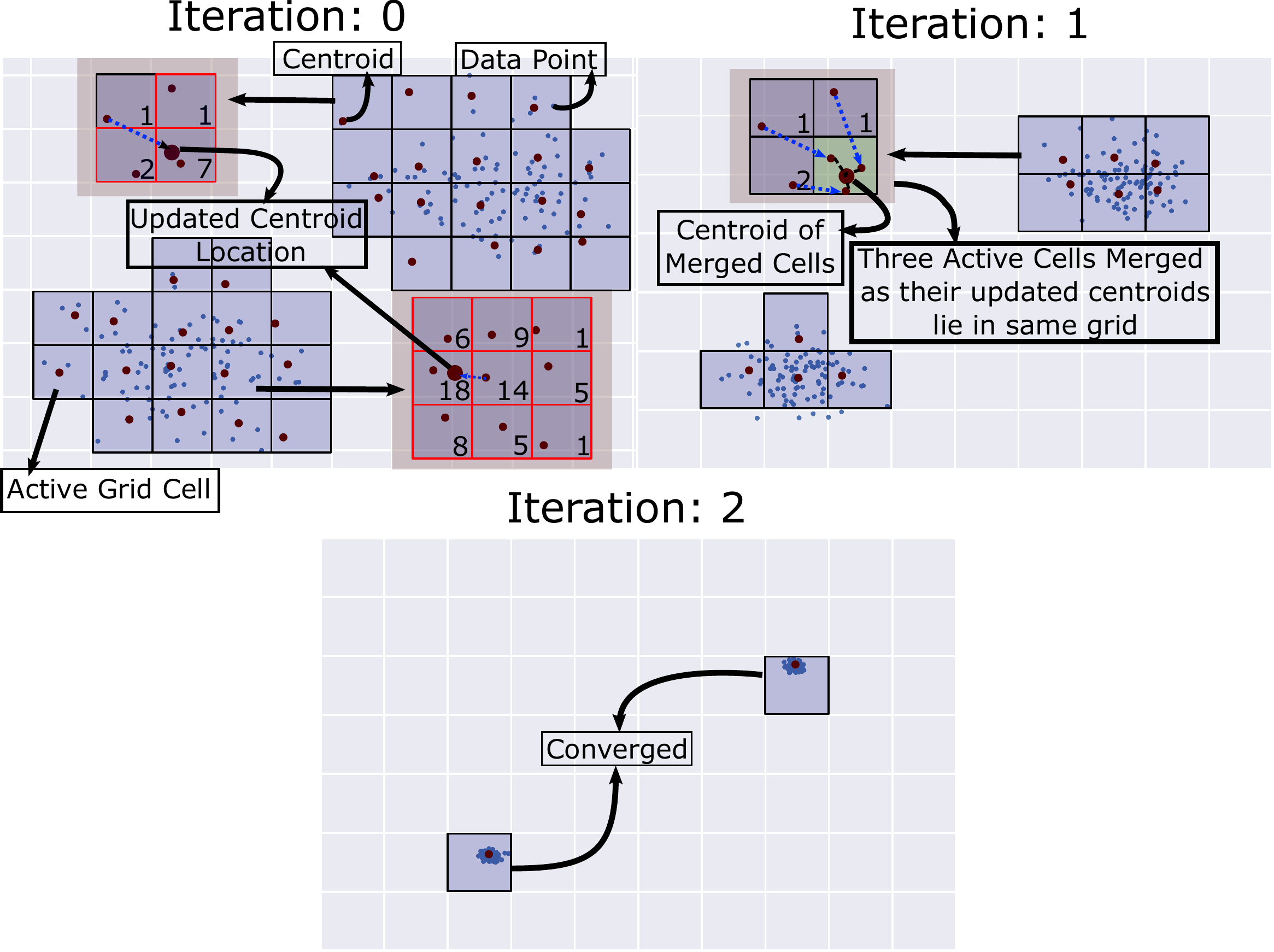}
    \caption{Demonstration of MS and GS parallelly for clustering of a Gaussian distributed 2-D dataset. GS can approximate the MS steps effectively despite reducing the number of centroids in each iteration. Such reduction provides speed up in the runtime compared to the original MS, MS++, and $\alpha$-MS++.}
    \label{fig:GS}
\end{figure}

\section{The GridShift}
In this section, we start by detailing the basic steps of GS and then outline the primary differences between MS++ and GS. %, which provides speed up and robustness in GS. 
Finally, we present an object tracking algorithm based on GS.
\subsection{Basic Implementation of the GridShift}
Let $X:= \{x_1,x_2,\ldots,x_n\}$ be a dataset of $n$ data points, where $x_i \in \mathbb{R}^d$. This algorithm assumes that the whole data space is partitioned into equal and disjoint grid cells of predefined length $h$. We categorize these grid cells into two types: active grid cells and inactive grid cells. The grid cells with at least one data point as a resident are treated as active; otherwise, they are considered inactive grid cells. Here, we can label each active grid cell as a cluster where its resident data points are members of that cluster. These grid cells have their own three attributes represented as hash tables: centroid, number of resident data points, and a set of resident data points. All active grid cells update their hash tables in each iteration. Further, GS shifts the location of the active grid cells according to their new centroids. During shifting, some active grid cells may enlist the same location; therefore, they merge into one. These steps are repeated till convergence, i.e., no change in hash tables of all active grid cells. After convergence, GS returns clustering results in terms of active grid cells. A demonstration of the GS steps is depicted in Figure \ref{fig:GS}. We summarize the basic steps of GS in Algorithm \ref{alg:1} and also discuss in more detail as follows.
\begin{itemize}
    \item[\textit{i)}] \textit{Initialization of the active grid cells:} For initialization, we create three empty hash tables for each attribute of active cells: $\mathcal{S}$ (store centroid), $\mathcal{C}$ (store number of resident data points), and $\mathcal{H}$ (store resident data points). We divide each data point by $h$ for creating the initial index of the active grid cells. After that, we calculate the element-wise floor function, which returns a $d$-dimensional integer vector. This $d$-dimensional vector represents the index of the active grid cell. All three hash tables are updated using lines 3-11 of Algorithm \ref{alg:1}.
    \item[\textit{ii)}] \textit{Update centroid of each active grid cells:} To update the centroid of an active grid cell, we utilize its neighboring active grid cells. Here, neighboring grid cells mean the grid cells that touch this active grid cell by a side or a point, i.e., a grid cell with index $j$ is considered as neighboring grid cell of the grid cell with index $i$, if $i = j+v,$ where $v \in \{-1,0,1\}^d$. We update the centroid of an active grid cell using the weighted mean of the centroid of all its neighboring grid cells, i.e.
    \begin{equation} \label{update}
        \mathcal{S}(j) \gets \frac{\sum_{\left(v \in \{-1,0,1\}^d\right)} w_v \mathcal{S}(j+v)}{\sum_{\left(v \in \{-1,0,1\}^d\right)} w_v},
    \end{equation}
    where,
    \begin{equation}
        w_v = \mathcal{C}(j+v).
    \end{equation}
    \item[\textit{iii)}] \textit{Update of the location of the active grid cells:} Each active grid cell needs to update its location or index due to the update of its centroid. The new location is calculated using $\floor*{\mathcal{S}/h}$. 
    \item[\textit{iv)}] \textit{Merger of the some of the active grid cells:} The active grid cells that share the same location (index) need to merge into one grid cell with updated attributes. The centroid of the new grid cell is calculated using the weighted mean of the centroids of all active grid cells that are merged, i.e.
    \begin{equation}
        \mathcal{S}(j) = \frac{\sum_{i \in M} \mathcal{C}(i) \mathcal{S}(i)}{\sum_{i \in M} \mathcal{C}(i)},
    \end{equation}
    where $M$ is the set of indices of grid cells that are merged into one. The other attributes are updated in the following way:
    \begin{equation}
        \begin{split}
            \mathcal{C}(j) = \sum_{i \in M}\mathcal{C}(i),\\
            \mathcal{H}(j) = \bigcup_{i \in M} \mathcal{H}(i).
        \end{split}
    \end{equation}
    \item[\textit{v)}] \textit{Stopping criterion:} The algorithm comes out from the iteration when attributes of the active grid cells will not update, i.e., active grid cells will not have a single neighboring active grid cell to update the grid cell's attributes.
\end{itemize}
 It is worth noting that we optimize the basic steps of the GS during the implementation in Algorithm \ref{alg:1} to reduce extra loops for improving the time complexity. According to the steps used in Algorithm \ref{alg:1}, the time complexity of GS is $\mathcal{O}(m3^d)$ per iteration, where $m$ denotes the number of the active grid cells and it remains non-increasing as the iterations proceed.

\begin{algorithm}
\scriptsize
\caption{GridShift}\label{alg:1}
\textbf{Input:} side length of cells (bandwidth) $h$, $X_n$\;
\textbf{Initialize:} Empty hash tables $\mathcal{S}$ (store centroid), $\mathcal{C}$ (store number of resident data points), and $\mathcal{H}$ (store resident data points)\;
\For {$i \in [1,n]$}{
\eIf{$\floor*{x_i/h}\in \mathcal{S}.keys$}{
$\mathcal{S}\left(\floor*{x_i/h}\right) \gets \frac{\mathcal{C}\left(\floor*{x_i/h}\right)\mathcal{S}\left(\floor*{x_i/h}\right)+x_i}{\mathcal{C}\left(\floor*{x_i/h}\right)+1}$\;
$\mathcal{C}\left(\floor*{x_i/h}\right) \gets \mathcal{C}\left(\floor*{x_i/h}\right)+1$\;
$\mathcal{H}\left(\floor*{x_i/h}\right) \gets  \mathcal{H}\left(\floor*{x_i/h}\right) \cup \{i\}$\;
}{
$\mathcal{S}\left(\floor*{x_i/h}\right) \gets x_i$\;
$\mathcal{C}\left(\floor*{x_i/h}\right) \gets 1$\;
$\mathcal{H}\left(\floor*{x_i/h}\right) \gets  \{i\}$\;
}
}
\Do{$\left(\mathcal{S}.keys+v \right) \cap \mathcal{S}.keys == \emptyset$, where $v\in(\{-1,0,1\}^d-\{0,0,0\})$}{
$\mathcal{S}' \gets \mathcal{S}$, $\mathcal{H}' \gets \mathcal{H}$  and $\mathcal{C}' \gets \mathcal{C}$\;
Empty hast tables $\mathcal{C}$ and $\mathcal{S}$\;
 \For {$j \in \mathcal{S}.keys$}{ 
$\mathcal{S}'(j) \gets \frac{\sum_{\left(v \in \{-1,0,1\}^d\right)} \mathcal{C}'(j+v) \mathcal{S}'(j+v)}{\sum_{\left(v \in \{-1,0,1\}^d\right)} \mathcal{C}'(j+v)}$\;

\eIf{$\floor*{\mathcal{S}'(j)/h}\in \mathcal{S}.keys$}{
$\mathcal{S}\left(\floor*{\mathcal{S}'(j)/h}\right) \gets \frac{\mathcal{C}\left(\floor*{\mathcal{S}'(j)/h}\right)\mathcal{S}\left(\floor*{\mathcal{S}'(j)/h}\right)+\mathcal{S}'(j)}{\mathcal{C}\left(\floor*{\mathcal{S}'(j)/h}\right)+\mathcal{C}'(j)}$\;
$\mathcal{C}\left(\floor*{\mathcal{S}'(j)/h}\right) \gets \mathcal{C}\left(\floor*{\mathcal{S}'(j)/h}\right)+\mathcal{C}'(j)$\;
$\mathcal{H}\left(\floor*{\mathcal{S}'(j)/h}\right) \gets  \mathcal{H}\left(\floor*{\mathcal{S}'(j)/h}\right) \cup \mathcal{H}'(j)$\;
}{

$\mathcal{S}\left(\floor*{\mathcal{S}'(j)/h}\right) \gets \mathcal{S}'(j)$\;
$\mathcal{C}\left(\floor*{\mathcal{S}'(j)/h}\right) \gets \mathcal{C}'(j)$\;
$\mathcal{H}\left(\floor*{\mathcal{S}'(j)/h}\right) \gets  \mathcal{H}'(j)$\;
}
}
}
\textbf{return} $\mathcal{H}$ and $\mathcal{S}$
\end{algorithm}

\subsection{Primary differences between MS++ and GS}
The most recent attempt to speedup MS is MS++ proposed by Jang and Jiang \cite{jang2021meanshift++}. The improvement process in  MS++ involves 1) partitioning the input domain into a grid, 2) assigning each data point to its associated grid, and 3) searching for the estimated mode for each point from its grid as well as grids within its neighborhood. Although MS++ is more than $1000$x faster than MS, Park \cite{park} claimed that without parallel computing, MS++ is not sufficiently fast yet. Therefore, \cite{park} proposed a variant of  MS++  known as $\alpha$-MS++,  which combines the use of an auxiliary hash table,  a speedup factor $(\alpha)$ to reduce the number of iterations required until convergence as well as seeking more accurate modes using more numbers of neighboring grid cells while reducing the size of grid cells to minimize the computational redundancy of MS++.  Although the experimental results in MS++ and $\alpha$-MS++ are considerably faster than MS  and MS++, respectively, we will show that GS is still orders of magnitude faster than both MS++ and $\alpha$-MS++. Although both GS and MS++ use grid-based neighborhood search, the basic framework of GS is different from MS++ in the following aspects:
\begin{itemize}
    \item[\textit{i)}] MS++ updates the location of each data point by using the weighted mean of the data points of $1$-neighboring grid cells. On the other hand, GS updates the centroid of each active data cell by using the weighted mean of the centroid of the $1$-neighboring active grid cells.
    \item[\textit{ii)}] In MS++, the shifted location of all data points associated with the same grid cell has the same value at any particular iteration. In GS, the centroid can also be treated as the location of all data points associated with the same grid cells. However, these centroids' locations may differ from the shifted location calculated in MS++ as GS updates centroids in a sequential manner (shown in Eqn \ref{update}), i.e.,
    \begin{equation}\label{update2}
        \begin{split}
            & \mathcal{S}^{(t)}(j) \gets \frac{\sum_{\left(v \in \{-1,0,1\}^d\right)} w_v \mathcal{S}^{(t)}(j+v)}{\sum_{\left(v \in \{-1,0,1\}^d\right)} w_v},\\
            &\mathcal{S}^{(t+1)}(j) = \mathcal{S}^{(t)}(j), ~ \forall j \in \{1,2,\ldots, k^{(t)}\}
        \end{split}
    \end{equation}
    However, MS++ updates in a parallel fashion which is equivalent to the following equation (in terms of Eqn. (\ref{update2}).
        \begin{equation}\label{update3}
             \mathcal{S}^{(t+1)}(j) \gets \frac{\sum_{\left(v \in \{-1,0,1\}^d\right)} w_v \mathcal{S}^{(t)}(j+v)}{\sum_{\left(v \in \{-1,0,1\}^d\right)} w_v}.
    \end{equation}
    For better understanding, in Fig. \ref{revfig:1}, we show the shift of centroids for both cases on a sample of nine centroids. As shown in this figure, Eqn. \ref{update3} does not take the neighbor's updated location into account in the shifting of centroids. Alternatively, in Eqn. \ref{update2}, the location of neighbors that have already been updated in the shifting sequence is used instead of their older location. In comparison with Eqn. \ref{update3}, Eqn \ref{update2} provides better convergence of data points towards modes. We will add clearer explanations on this in the final version.
    \begin{figure}[!ht]
        \centering
        \includegraphics[width = 0.7\linewidth]{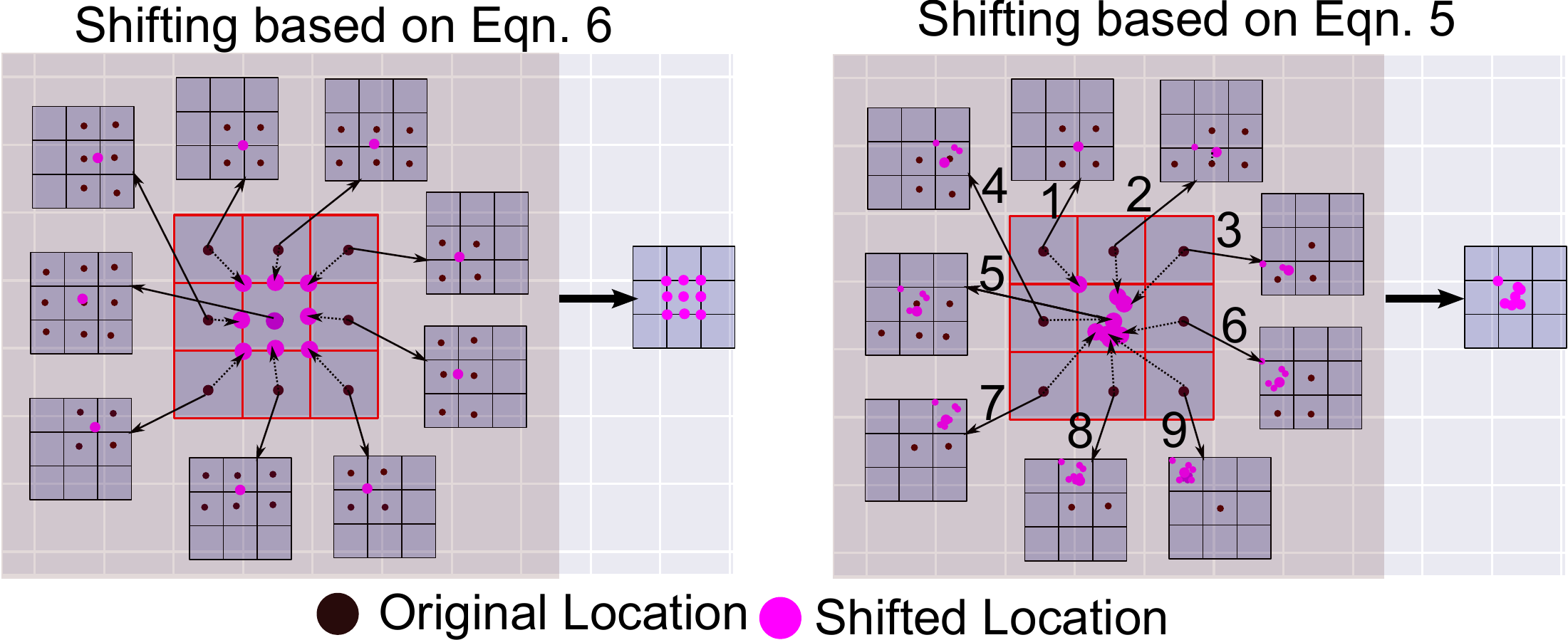}
        \caption{Shifting of centroids using Eqns. 5 and 6. }
        \label{revfig:1}
    \end{figure}
    
    \item[\textit{iii)}] The time complexity of MS++ is $\mathcal{O}(n3^d)$ per iteration, where $n$ is the number of data points. On the other hand, the time complexity of the GS is $\mathcal{O}(m3^d)$ per iteration, where $m << n$ with the increase of iterations. Therefore, GS is faster than MS++ by $n/m_{avg}$ times theoretically, where $m_{avg}$ is the average value of $m$ throughout the iterations.
\end{itemize}
\subsection{GS-based Object Tracking Algorithm}
GS can be a good candidate for object tracking due to its effective mode-seeking behavior and faster runtime. Moreover, GS's clustering results in grid cells are a suitable replacement for the back-projected probability distribution of the histogram~\cite{jang2021meanshift++}. In Algorithm \ref{alg:2}, we show the basic steps of the proposed object tracking algorithm based on GS. The basic steps of this algorithm are as follows.
\begin{algorithm}
\scriptsize
\caption{GS-based object tracking}\label{alg:2}
\textbf{Input:} Sequence of frames $X_0$, $X_1$, $X_2$,$\ldots$, $X_T$; Initial centre $o$; Length $l$; Width $w$; Increment factor $f$; Bandwidth $h$; and Termination tolerance $\eta$\;
\textbf{Run} GS on the color space of frame window $W(o,l,w) \cap X_0$\;
\textbf{Initialize:} $B \gets \{\floor*{c/h}:c\in C\}$, where $C$ is the union of manually selected clusters\;
\For{$i = 1,2,\ldots,T$}{
\Do{ $o$ converges with $\eta$}{
$W' \gets W(o,l/f,w/f)$\;
$R \gets \{x \in W' \cap X_i: \floor*{x/h} \in B\}$\;
\eIf{$R == \phi$}{
$l \gets 1.1l$\;
$w \gets 1.1w$\;
}{
$o \gets$ mean of $(x,y)$-position of $R$'s points in the frame $X_i$\;
$B \gets \{\floor*{c/h}:c\in R\}$\;
\eIf{$\left(\max(l)-\min(l)\right) < l$}{
$l \gets 0.99l$\;
}{
$l \gets 1.01l$\;}
\eIf{$\left(\max(w)-\min(w)\right) < w$}{
$w \gets 0.99w$\;
}{
$w \gets 1.01w$\;}
}
}
\textbf{emit} $W(o,l,w)$ for frame $X_i$\;}
\end{algorithm}
\begin{itemize}
    \item[\textit{i)}] \textit{Initialization:} We apply GS on the color pixel of the initial frame window $W(o,l,w) \cap X_o$, where $W(o,l,w)$ represents a track window with center $o$, length $l$, and width $w$. GS returns different clusters for different objects. We select clusters manually for the targeted object. Generally, we often select clusters with bigger size. After selecting the clusters, we calculate the reference set of grid cells, $R$, where data points of the selected clusters are the resident in the given frame window. Further, we use this reference set of grid cells to track the centers in the new frame.
    \item[\textit{ii)}] \textit{Calculation of new center:} With the change of the frame, the target object also changes the location. According to the change of location of the target object, we need to calculate the new center of the window. The new center for the given frame is calculated iteratively till convergence. For this purpose, we create a track widow W. After that, we find the data points, which are the resident of the grid cells of the R, and create a set B. then, we calculate the center by taking the mean of (x,y)-position of B's data points in the given frame.
    \item[\textit{iii)}] \textit{Update length and width of the tracking window:}  Sometimes the object moves fast enough to drive out the track window in the new frame due to the small size of the track window. In such cases where B is an empty set, we increase the length and width of the search window by 1.1 times to increase the chances of object tracking. On the other hand, the size of the tracking object and its appearance change with frames' changing. To adapt the tracking window accordingly, we increase or decrease the length and/or width (shown in lines 14-21 of Algorithm 2.) 
\end{itemize}
The developed new scheme of object tracking is more robust and effective than the CS algorithm. Its low dependency on the probability distribution of the color histogram of the object and adaptation scheme of the tracking window improves the performance in the object tracking (demonstrated in motivating example 2 in Section 1).
\begin{table}[htbp]
  \centering
   \resizebox{!}{!}{ \begin{tabular}{l|ccc|ccc}\hline
    \multirow{2}[0]{*}{Dataset} & \multicolumn{3}{c}{ARI} & \multicolumn{3}{c}{AMI} \\\cline{2-7}
          & GS    & MS++  & $\alpha$-MS++ & GS    & MS++  & $\alpha$-MS++ \\\hline\hline
    \multirow{2}[0]{*}{Phone Accelerometer} & $\textbf{0.0899}$ & $0.0897$ & $0.0896$ & $0.1923$ & $\textbf{0.1959}$ & $0.1921$ \\
          & $(38.23s)$ & $(1599.34s)$ & $(475.58s)$ & $(45.34s)$ & $(2523.85s)$ & $(646.27s)$ \\\hline
    \multirow{2}[0]{*}{Phone Gyroscope} & $\textbf{0.2401}$ & $0.2354$ & $0.2355$ & $\textbf{0.1837}$ & $0.1835$ & $0.1824$ \\
          & $(110.43s)$ & $(5324.19s)$ & $(934.28s)$ & $(45.39s)$ & $(1723.19s)$ & $(780.35s)$ \\\hline
    \multirow{2}[0]{*}{Watch Accelerometer} & $\textbf{0.1001}$ & $0.0913$ & $0.0908$ & $\textbf{0.2314}$ & $0.2309$ & $0.2301$ \\
          & $(18.38s)$ & $(926.59s)$ & $(319.29s)$ & $(41.86s)$ & $(2500.97s)$ & $(658.37s)$ \\\hline
    \multirow{2}[0]{*}{Watch Gyroscope} & $\textbf{0.1623}$ & $0.1593$ & $0.1602$ & $\textbf{0.1422}$ & $0.1336$ & $0.1397$ \\
          & $(25.23s)$ & $(1247.08s)$ & $(416.85s)$ & $(11.53s)$ & $(484.27s)$ & $(180.24s)$ \\\hline
    \multirow{2}[0]{*}{Still} & $0.7896$ & $0.7899$ & $\textbf{0.7901}$ & $\textbf{0.8602}$ & $0.8551$ & $0.8599$ \\
          & $(0.17s)$ & $(8.23s)$ & $(3.12s)$ & $(0.12s)$ & $(6.23s)$ & $(2.62s)$ \\\hline
    \multirow{2}[0]{*}{Skin} & $\textbf{0.3266}$ & $0.3264$ & $0.3264$ & $\textbf{0.4251}$ & $0.4238$ & $0.4234$ \\
          & $(0.29s)$ & $(12.58s)$ & $(3.28s)$ & $(0.19s)$ & $(9.28s)$ & $(2.92s)$ \\\hline
    \multirow{2}[0]{*}{Wall Robot} & $\textbf{0.1801}$ & $0.1788$ & $0.1748$ & $\textbf{0.3356}$ & $0.3239$ & $0.3336$ \\
          & $(<0.01s)$ & $(0.14s)$ & $(0.08s)$ & $(<0.01s)$ & $(0.42s)$ & $(0.12s)$ \\\hline
    \multirow{2}[0]{*}{Sleep Data} & $\textbf{0.1201}$ & $0.1181$ & $0.1193$ & $\textbf{0.1117}$ & $0.1028$ & $0.1056$ \\
          & $(<0.01s)$ & $(0.01s)$ & $(<0.01s)$ & $(<0.01s)$ & $(0.01s)$ & $(<0.01s)$ \\\hline
    \multirow{2}[0]{*}{Balance Scale} & $0.0799$ & $\textbf{0.0883}$ & $0.0862$ & $\textbf{0.2301}$ & $0.2268$ & $0.2274$ \\
          & $(<0.01s)$ & $(0.06s)$ & $(0.02s)$ & $(<0.01s)$ & $(0.07s)$ & $(0.02s)$ \\\hline
    \multirow{2}[0]{*}{User Knowledge} & $\textbf{0.3403}$ & $0.3398$ & $0.3398$ & $\textbf{0.4108}$ & $0.4086$ & $0.4083$ \\
          & $(<0.01s)$ & $(0.05s)$ & $(0.02s)$ & $(<0.01s)$ & $(0.05s)$ & $(0.02s)$ \\\hline
    \multirow{2}[0]{*}{Vinnie} & $0.4568$ & $0.4594$ & $\textbf{0.4597}$ & $0.3665$ & $0.3666$ & $\textbf{0.3677}$ \\
          & $(<0.001s)$ & $(<0.01s)$ & $(<0.01s)$ & $(<0.001s)$ & $(<0.01s)$ & $(<0.01s)$ \\\hline
    \multirow{2}[0]{*}{PRNN} & $\textbf{0.2093}$ & $0.2093$ & $0.2093$ & $\textbf{0.2913}$ & $0.2912$ & $0.2912$ \\
          & $(<0.001s)$ & $(0.01s)$ & $(<0.01s)$ & $(<0.001s)$ & $(<0.01s)$ & $(<0.01s)$ \\\hline
    \multirow{2}[0]{*}{Iris} & $\textbf{0.6246}$ & $0.5681$ & $0.5826$ & $\textbf{0.8014}$ & $0.7316$ & $0.7427$ \\
          & $(<0.001s)$ & $(<0.01s)$ & $(<0.01s)$ & $(<0.001s)$ & $(<0.01s)$ & $(<0.01s)$ \\\hline
    \multirow{2}[0]{*}{Transplant} & $0.7524$ & $\textbf{0.7687}$ & $0.7543$ & $\textbf{0.7248}$ & $0.7175$ & $0.7217$ \\
          & $(<0.001s)$ & $(<0.01s)$ & $(<0.01s)$ & $(<0.001s)$ & $(<0.01s)$ & $(<0.01s)$ \\\hline
    \end{tabular}}%
\caption{\textit{Comparison of GS, MS++, and $\alpha$-MS++ based on ARI and AMI scores over $14$ datasets}. These scores are calculated after tuning the bandwidth on each dataset for each algorithm separately. Best scores are reported in bold fonts. Compared to other algorithms, GS provides the best scores in ARI and AMI scores over most of the dataset with faster runtime speed. GS is 40x and 20x faster than MS++ and $\alpha$-MS++, respectively. Note that the parameter $h$ is tuned based on silhouette score for all algorithms.}
\label{tab:2}%

\end{table}%
\section{Theoretical Analysis}
In this section, we discuss the theoretical properties of GS. Due to the page limit, we provide proofs of the corresponding theorems in the supplementary file.

Before theoretical analysis, it is worth noting that GS also uses grid-based partitioning of the data space similar to MS++. Therefore, from Theorem 1 of \cite{jang2021meanshift++}, we can say that the GS approach can statistically perform at least similar to the other density function estimators. 
\subsection{Convergence Guarantee}
For any dataset $X(=\{x_1,x_2,\ldots,x_n\}) \in \mathbb{R}^d$, let us define a mapping $g^{(t)}: X \gets \mathcal{C}^{(t)}$ at $t$-iteration such that each data point $x_i \in X$ is assigned to one of the $k^{(t)}$ active grid cells (clusters) $c_i^{(t)} \in \mathcal{C}^{(t)}$. Therefore,
\begin{equation}
    \begin{split}
        & \mathcal{C}^{(t)} = \{c_1^{(t)},c_2^{(t)},\ldots,c_{k^{(t)}}^{(t)}\},~\mbox{and}\\
        & c_i^{(t)} \cap c_j^{(t)} = \phi, \forall i,j \in \{1,2,\ldots,k^{(t)}\},~ i \neq j.
    \end{split}
\end{equation}
Here, each active cell $c_i^{(t)}$ has a set of $1$-neighboring active grid cells, $\mathcal{Q}_{c_i}^{(t)} \subseteq \mathcal{C}^{(t)}$.
\begin{theorem}
For any given dataset $X \in \mathbb{R}^d$, the $\{\mathcal{C}^{(t)}\}_{t=1,2,\dots}$ estimated by successive proposed grid cells shifts attains convergence, i.e.
$\mathcal{C}^{(i)} == \mathcal{C}^{(i++)}$, where $i$ is a finite number.
\end{theorem}
\begin{theorem}
For any $X$, there exists $T \in \mathbb{N}$ such that $\mathcal{Q}_{c_i}^{(t)} = c_i^{(t)}$, $\forall i \in \{1,2,\ldots,k^{(t)}\}$ for all $t \geq T$.
\end{theorem}
The above two theorems confirm that GS attains convergence after a finite number of iterations when the active grid cells do not have any other active members in their 1-neighborhood to update their attributes any further.
\subsection{Convergence Rate}
In this subsection, we analyze the behavior of GS on mode seeking of a dataset sampled from a Gaussian distribution. We will prove that the number of active grid cells will form a non-increasing sequence, and centroids of these active grid cells will shrink towards the mean of the distribution with at least a cubic convergence rate.

Let $\phi(x; \mu, \Sigma)$ denotes a Gaussian probability density function, where $\mu$ and $\Sigma$ are the mean and dispersion matrix of the density function, respectively. To remove the dependency on the random process, we consider infinite samples generated from density $q(x) = \phi(x;0,diag(s_1^2,s_2^2,\ldots,s_d^2))$. 
\begin{theorem}
For dataset $X = \{x_1,x_2,\ldots,x_n\}$ where $x_i \sim \mathcal{N}(0,diag(s_1^2,\ldots,s_d^2))$, let centroids $\{c_i^{(t+1)}\}_{i=1}^{k^{(t+1)}} \sim \int y p^{(t)}(y|c)dy$, where $p^{(t)}()$ represents the distribution of $\{c_i^{(t+1)}\}_{i=1}^{k^{(t+1)}}$ and $p^{(t)}(y|c) = k(z-y)q^{(t)}(y)/p^{(t)}(z)$. Then (i) $\{c_i^{(t+1)}\}_{i=1}^{k^{(t+1)}} \sim \mathcal{N}\left(0,diag\left( \left( s_1^{(t+1)}\right)^2,\ldots,\left(s_d^{(t+1)}\right)^2 \right)\right)$, with $s_j^{(t+1)} = \left( 1+ 2.25\frac{h^2}{s^2}\right)^{-1}s_j^{(t)}$ and (ii) $k^{(t+1)} = \prod_{j = 1}^d \left( \floor*{\frac{6s_j^{(t+1)}}{h}}+1\right)$, where $\{k^{(t)}\}_{t=1}^{\infty}$ is non-decreasing sequence that converges to $1$.
\end{theorem}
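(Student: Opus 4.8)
\section*{Proof proposal for Theorem 3}

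The plan is to argue by induction on $t$, maintaining the hypothesis that the active-cell centroids at iteration $t$ are distributed as the centered product Gaussian
\begin{equation}
\mathcal{N}\!\left(0, \mathrm{diag}\big((s_1^{(t)})^2,\ldots,(s_d^{(t)})^2\big)\right) \;=:\; q^{(t)}.
\end{equation}
The base case $t=0$ holds because, in the infinite-sample limit, the initial centroids inherit the sampling density $q$. The crux of the whole argument is to replace the discrete averaging of GridShift --- the weighted mean over the $1$-neighborhood cube of side $3h$ --- by an equivalent kernel-smoothing step. I would model this as mean shift driven by a separable Gaussian kernel $k(z-y)=\prod_{j}\phi(z_j-y_j;0,\tau^2)$ whose per-coordinate bandwidth equals the half-width of the $1$-neighborhood, i.e. $\tau=\tfrac{3}{2}h$, so that $\tau^2=2.25\,h^2$. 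This identification is what will ultimately produce the constant $2.25$ in the statement.

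Given this reduction, part (i) is a conjugacy computation. The update sends a centroid located at $z=c$ to the posterior mean $\int y\,p^{(t)}(y\mid c)\,dy$, with $p^{(t)}(y\mid c)\propto k(c-y)\,q^{(t)}(y)$. Since both factors are Gaussian in $y$ and the covariances are diagonal, the posterior factorizes coordinatewise and the posterior mean is the linear map whose $j$-th coordinate is
\begin{equation}
\frac{(s_j^{(t)})^2}{(s_j^{(t)})^2+\tau^2}\,z_j,
\qquad \tau^2 = 2.25\,h^2 .
\end{equation}
Pushing the Gaussian law $q^{(t)}$ of $z$ through this linear map keeps the distribution centered Gaussian and contracts each standard deviation by the factor $a_j=(s_j^{(t)})^2/\big((s_j^{(t)})^2+\tau^2\big)=\big(1+2.25\,h^2/(s_j^{(t)})^2\big)^{-1}$, giving exactly $s_j^{(t+1)}=\big(1+2.25\,h^2/(s_j^{(t)})^2\big)^{-1}s_j^{(t)}$ (I read the bare $s^2$ in the statement as $(s_j^{(t)})^2$). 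This both closes the induction and establishes claim (i).

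For part (ii) I would count active cells by their effective support. A centered Gaussian of standard deviation $\sigma$ places essentially all of its mass in $[-3\sigma,3\sigma]$, an interval of width $6\sigma$ that meets $\floor*{6\sigma/h}+1$ grid intervals of length $h$; because the grid is a Cartesian product of such one-dimensional partitions, the number of occupied cells multiplies across coordinates, yielding $k^{(t+1)}=\prod_{j=1}^{d}\big(\floor*{6 s_j^{(t+1)}/h}+1\big)$. Monotonicity and the limit then follow from part (i): since $\tau^2>0$ forces $a_j<1$, each $s_j^{(t)}$ is strictly decreasing, so every factor $\floor*{6 s_j^{(t)}/h}+1$ is non-increasing and hence so is $k^{(t)}$ (the ``non-decreasing'' in the statement should read \emph{non-increasing}); once $s_j^{(t)}<h/6$ that factor equals $1$, and as all $s_j^{(t)}\to 0$ the product collapses to $k^{(t)}\to 1$. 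The advertised rate drops out of the same recursion: writing $s_j^{(t+1)}=(s_j^{(t)})^3/\big((s_j^{(t)})^2+2.25\,h^2\big)\le (s_j^{(t)})^3/(2.25\,h^2)$ exhibits an at-least-cubic contraction once $s_j^{(t)}\lesssim h$.

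The main obstacle is the very first step --- rigorously justifying the passage from the flat, discrete $1$-neighborhood average to a Gaussian kernel smoothing with effective variance exactly $2.25\,h^2$, since a genuinely flat (box) kernel over the neighborhood neither preserves Gaussianity nor yields a posterior mean that is linear in $z$, and its true second moment $(\tfrac32 h)^2/3$ differs from $2.25\,h^2$ by the uniform-versus-radius factor. I would therefore either adopt the Gaussian-kernel idealization explicitly as the model of the neighborhood weighting, or show that, in the regime $s_j^{(t)}\gg h$ that dominates the early iterations, a Taylor expansion of the flat-kernel posterior mean agrees with the Gaussian one to the order needed for the stated contraction factor. A secondary, softer point is that the cell count rests on the $3\sigma$ effective-support convention, so claim (ii) should be read in the infinite-sample / effective-support sense rather than as an exact almost-sure count.
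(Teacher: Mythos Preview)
Your proposal is correct and follows essentially the same route as the paper's own proof: both treat the update as the posterior mean under a Gaussian kernel of variance $2.25h^2$, obtain the linear contraction $z_j\mapsto (s_j^{(t)})^2 z_j/\big((s_j^{(t)})^2+2.25h^2\big)$ by Gaussian conjugacy, and read off the recursion $s_j^{(t+1)}=(s_j^{(t)})^3/\big((s_j^{(t)})^2+2.25h^2\big)$ and the cell-count product formula. Your write-up is in fact more complete than the paper's---you explain the origin of the constant $2.25=(3/2)^2$, give the $3\sigma$ rationale behind the $6s_j/h$ count, correctly flag the $s^2\mapsto (s_j^{(t)})^2$ and ``non-decreasing''$\to$non-increasing slips, and explicitly isolate the Gaussian-versus-flat-kernel idealization that the paper leaves implicit.
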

\section{Results and Analysis}
To verify the effectiveness of GS, we carry out comparative experiments over three different scenarios: clustering, image segmentation, and object tracking. Source code of GS can be downloaded from \href{https://github.com/abhisheka456/GridShift}{https://github.com/abhisheka456/GridShift}.
\subsection{Clustering Application}
In this experiment, we investigate the performance of GS against MS++ and $\alpha$-MS++ on various clustering datasets. For a fair comparison, we considered the same datasets utilized in the original MS++'s paper~\cite{jang2021meanshift++}. We implement GS in Cython and the comparative study is performed using the Cython implementation of MS++ provided in \cite{jang2021meanshift++} and $\alpha$-MS++ done by us. Here, we skip the comparison between GS and MS as MS++ and $\alpha$-MS++ are the already available faster variants of MS~\cite{jang2021meanshift++, park}.

We independently tune parameter $h$ of GS over the range of $(0,1]$ for each dataset.  This is done by computing the Silhouette score for different values of the parameter $h$ within the specified range and choosing the one that provides the maximum Silhouette score.  To make a fair comparison, the MS++ and $\alpha$-MS++ parameter tuning processes are carried out in the same manner.  The obtained value of $h$ for all algorithms is reported in Table \ref{tab:new1}.
\begin{table}[!h]
    \centering
    \begin{tabular}{clccc} 
    \hline
    S.N & Dataset             & GS   & MS++ & $\alpha$MS++  \\ 
    \hline\hline
    1.  & Phone Gyroscope     & 0.68 & 0.65 & 0.67          \\
    2.  & Phone Accelerometer & 0.55 & 0.53 & 0.58          \\
    3.  & Watch Accelerometer & 0.72 & 0.72 & 0.72          \\
    4.  & Watch Gyroscope     & 0.45 & 0.42 & 0.46          \\
    5.  & Still               & 0.25 & 0.25 & 0.25          \\
    6.  & Skin                & 0.84 & 0.84 & 0.83          \\
    7.  & Wall Robot          & 0.38 & 0.38 & 0.38          \\
    8.  & Sleep Data          & 0.55 & 0.51 & 0.53          \\
    9.  & Balance Scale       & 0.62 & 0.61 & 0.62          \\
    10. & User Knoweldge      & 0.78 & 0.78 & 0.78          \\
    11. & Vinnie              & 0.80 & 0.80 & 0.80          \\
    12. & PRNN                & 0.43 & 0.43 & 0.43          \\
    13. & Iris                & 0.78 & 0.77 & 0.77          \\
    14. & Transplant          & 0.48 & 0.49 & 0.48          \\
    \hline
    \end{tabular}
    \caption{Tuned value of $h$ of GS, MS++, and $\alpha$-MS++ for each dataset used in experiment.}
    \label{tab:new1}
\end{table}

We employ the following two indices: Adjusted Mutual Information (AMI)~\cite{vinh2010information} and Adjusted Rand Index (ARI)~\cite{hubert1985comparing}, to determine the quality of the clustering results. These two indices are the popular way of analyzing clustering performance by comparing the calculated labels to actual labels of the clusters~\cite{jang2019dbscan++}. 

As discussed earlier, the computational complexity of GS is linear to the number of active grid cells and exponential with respect to the number of features. Therefore, we implement our algorithm with MS++ and $\alpha$-MS++ on $14$ standard low-dimensional datasets where the number of data points ranges from millions to  $100$~\cite{jang2021meanshift++}. Note that five datasets out of the 19 used in \cite{jang2021meanshift++} were excluded from our experiment because they were of very low-scale (less than 150) in size. The outcomes in terms of AMI, ARI, and runtime of all algorithms are depicted in Table \ref{tab:2}. As shown in Table \ref{tab:2}, GS provides better results with 40x and 20x faster runtime than MS++, and $\alpha$-MS++, respectively, on most datasets. Therefore, GS outperforms MS++ and $\alpha$-MS++ on these datasets in terms of clustering quality and runtime efficiency.

% \begin{table}[!h]
%     \centering
%     \scriptsize
%     \begin{tabular}{clccc}\hline
%      S.N & Dataset & $n$ & $d$ & $k$\\\hline\hline
%      1. & Phone Gyroscope  & $13932632$ & $3$ & $7$ \\
%      2. & Phone Accelerometer & $13062475$ & $3$ & $7$ \\
%      3. & Watch Accelerometer & $3540962$ & $3$ & $7$ \\
%      4. & Watch Gyroscope & $3205431$ & $3$ & $7$ \\
%      5. & Still & $949983$ & $3$ & $6$ \\
%      6. & Skin & $245057$ & $3$ & $2$ \\
%      7. & Wall Robot & $5456$ & $4$ & $4$ \\
%      8. & Sleep Data & $1024$ & $2$ & $2$ \\
%      9. & Balance Scale & $625$ & $4$ & $3$ \\
%      10. & User Knoweldge & $403$ & $5$ & $5$ \\
%      11. & Vinnie & $380$ & $2$ & $2$ \\
%      12. & PRNN & $250$ & $2$ & $2$ \\
%      13. & Iris & $150$ & $4$ & $3$ \\
%      14. & Transplant & $131$ & $3$ & $2$ \\\hline
%     \end{tabular}
%     \caption{Brief summary of datasets used in experiment. $n$: number of data points, $d$: number of features, and $k$: number of clusters.}
%     \label{tab:1}
%     \vspace{-3mm}
% \end{table}

% Table generated by Excel2LaTeX from sheet 'Sheet1'
We also analyze the effect of the bandwidth setting on the performance of GS, MS++, and $\alpha$-MS++ in terms of clustering accuracy and runtime. We depict the outcome of this analysis in Figure \ref{fig:3_new}. As shown in Figure \ref{fig:3_new}, GS provides better clustering results than other algorithms over different bandwidth settings with faster runtime. This analysis indicates that GS returns more accurate clustering results than MS++ and $\alpha$-MS++. The main reason behind the better performance of GS is that it updates centroids of active grid cells sequentially, which generate gradient-ascent shifting steps more stably in the next iteration.
\begin{figure}
    \centering
    \includegraphics[width=0.9\linewidth]{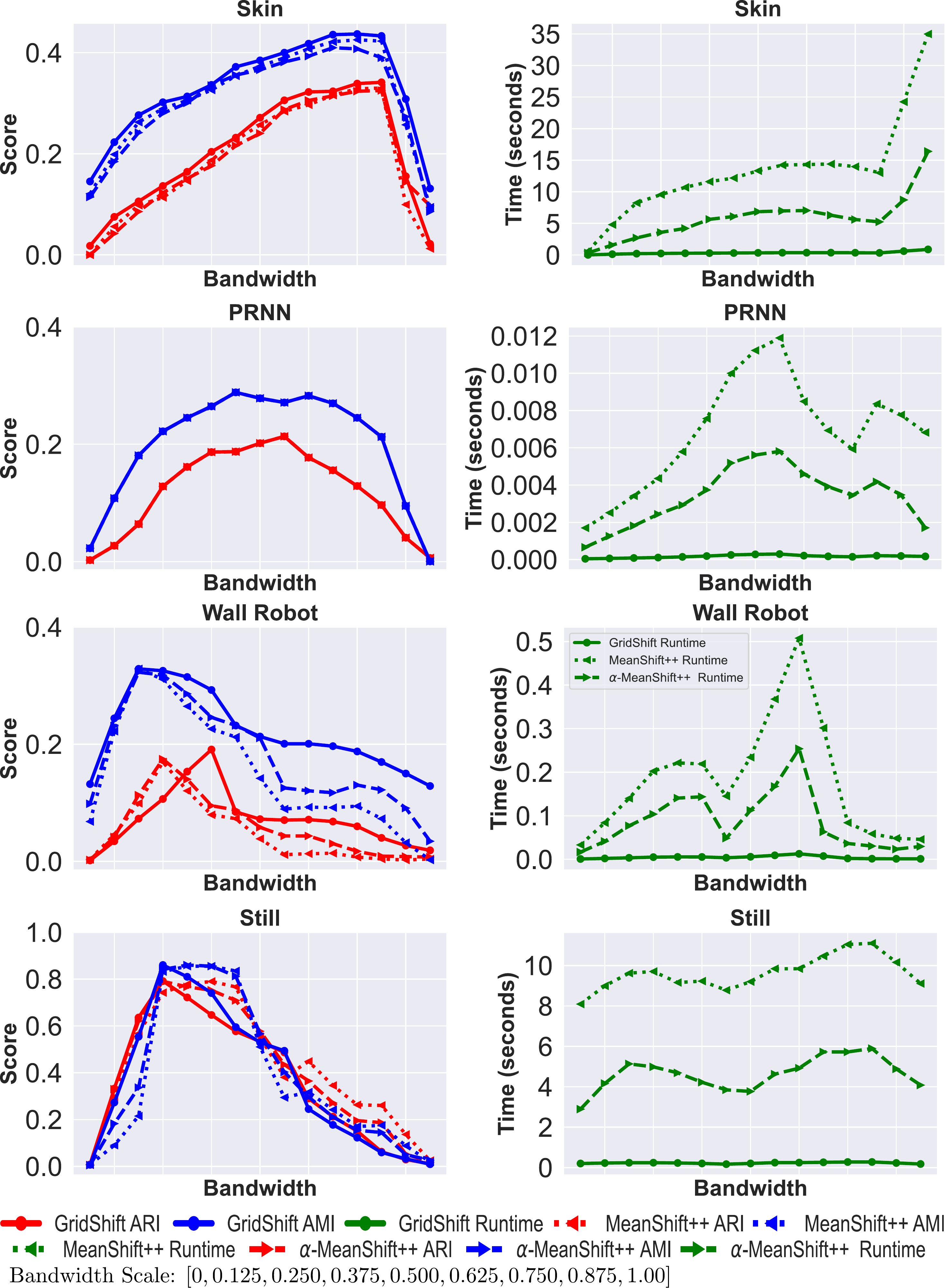}
    \caption{\textit{Comparison of GS, MS++, and $\alpha$-MS++ on four real-world datasets over a wide bandwidth range}. This figure shows the behavior of all algorithms over the different settings of bandwidth. We can see here that GS performs better than MS++ and $\alpha$-MS++ despite being faster over different bandwidth values.}
    \label{fig:3_new}
    
\end{figure}
\begin{figure}[!ht]
    \centering
    \includegraphics[width = 0.8\linewidth]{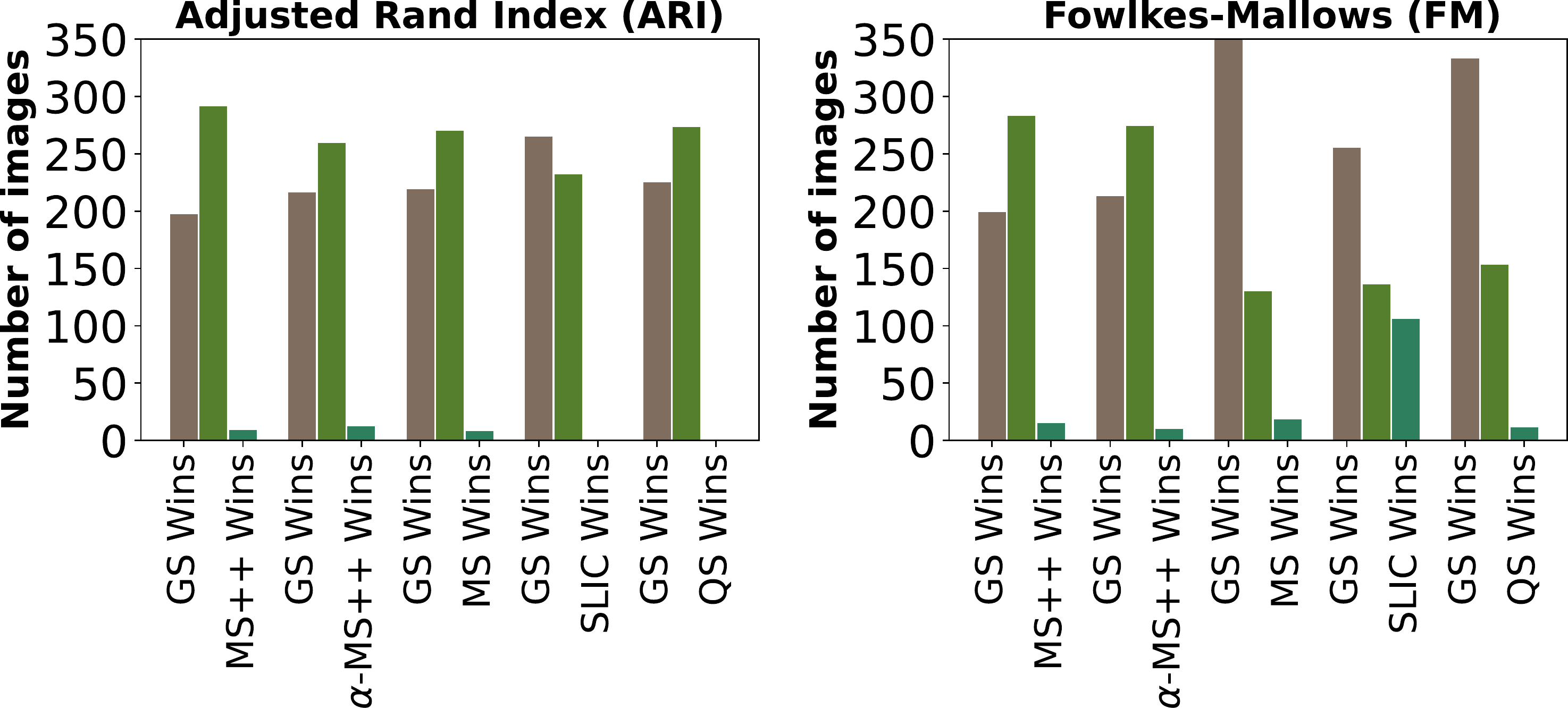}
    \caption{\textit{Comparison of all algorithms on image segmentation of BSDS500 benchmark images using the two popular metrics ARI and FM.} For each contender, we plot bars of GS winning, contender winning, and both are similar within 1\% of their scores for AMI and FM separately. Here, GS performs better than other algorithms in most cases.}
    \label{fig:2}
    
\end{figure}

\subsection{Image Segmentation}
In this section, we analyze the performance of GS compared to algorithms on unsupervised image segmentation. We consider the following algorithms as state-of-the-art:  Quickshift~\cite{Vedaldi2008QuickSA}, SLIC ($k$-means)~\cite{achanta2012slic}, and MS++~\cite{jang2021meanshift++}. For Quickshift, and SLIC, we use Python SciKit-Image Library~\cite{van2014scikit}, and Cython implementation~\cite{jang2021meanshift++} of MS++ is utilized here.  In this experiment, we skip the MS algorithm as the performance of MS++ is similar to MS, but with $10000$x speedup.

 We experiment on an image segmentation benchmark dataset BSDS500 for quantitative performance analysis of GS compared to other state-of-the-art algorithms. This benchmark dataset was also used in MS++ paper~\cite{jang2021meanshift++} and it has 500 images with six different human-labeled segmentation.  To make an objective comparison, we have tuned all parameters of image segmentation (unsupervised) algorithms, including GS's $h$. For the image segmentation experiment, we convert all images into 3-dimensional NumPy arrays of (R, G, B) values of pixels.  We implement GS along with MS++, $\alpha$-MS++, MS, SLIC, and Quickshift on each image and calculate ARI and Fowlkes-Mallows (FM)~\cite{fowlkes1983method} clustering scores for each human-labeled segmentation separately. For comparison, we take average scores of ARI and FM based on all six human-labeled segmentation. Note that we performed the GS image segmentation with both $(r,g,b)$ and $(r,g,b,x,y)$ inputs. However, the results were not significantly different. Therefore, we only reported results with $(r,g,b)$ cases. GS with $(r,g,b,x,y)$ performed similarly to GS with $(r,g,b)$ on $488$ and $492$ out of $500$ BSDS500 images in terms of ARI and FW, respectively. In Figure \ref{fig:2}, we show the performance comparison of GS with other algorithms in terms of these scores, and we also report the average runtime of all algorithms in Table \ref{tab:new}. As shown in Figure \ref{fig:2} and Table \ref{tab:new}, GS provides better results than other algorithms in most cases. Moreover, it is faster than other algorithms: $80$x, $40$x, $50000$x, $2.5$x, and  $900$x faster than MS++, $\alpha$-MS++, MS, SLIC, and Quickshift, respectively.

\begin{table}[!ht]
\scriptsize
\centering
    \begin{tabular}{|c|c|}\hline
     Algorithm    & Average Runtime (seconds) \\\hline\hline
      GridShift   & 0.031\\
      MS++ & 2.561\\
      $\alpha$-MS++ & 1.281\\
      MS & 1690.213\\
      SLIC ($k$-meanss) & 0.078\\
      Quickshift & 28.031\\\hline
    \end{tabular}
\caption{\textit{Comparison of the runtime of all algorithms on image segmentation of BSDS500 benchmark images.} The average runtime of GS is better than other contenders by 80x, 40x, 50000x, 2.5x, and 900x than MS++, $\alpha$-MS++, MS, SLIC, and Quickshift, respectively.}
\label{tab:new}

\end{table}
\subsection{Object Tracking}
This section demonstrates the robustness of the GS-based object-tracking algorithm compared to CS and MS++. In this experiment, we consider three visual tracker benchmark problems from \cite{WuLimYang13}.  In object tracking, Note that we have tuned the parameters of GS, MS++, and CS separately for each video sequence dataset, not for each frame of the sequence individually.
\begin{figure*}[!ht]
    \centering
    \includegraphics[width=0.7\linewidth]{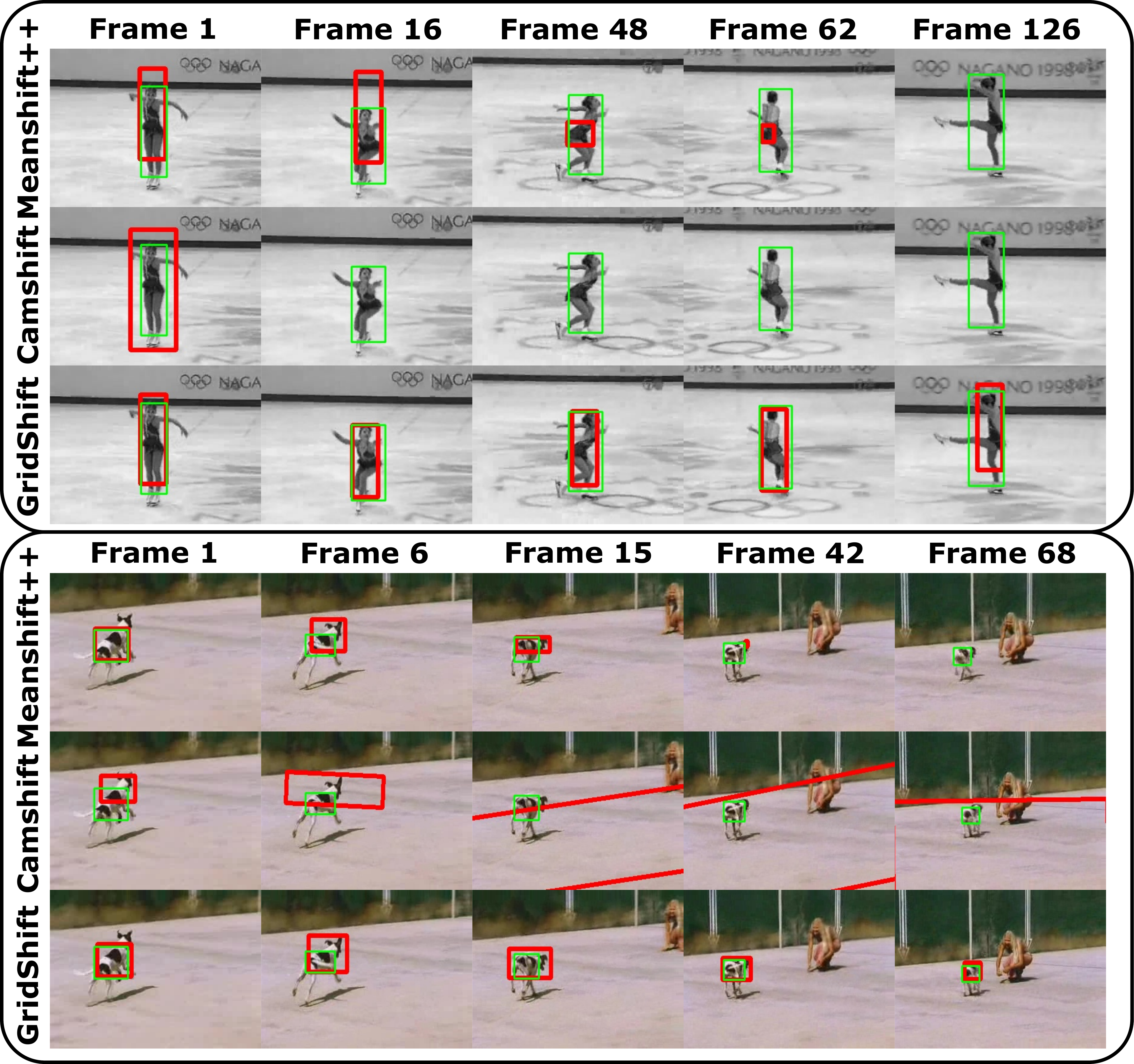}
    \caption{\textit{Comparison of meanshift++, camshift, and GridShift on object tracking.} Here, camshift fails to track an object in both cases because the user has to provide color ranges for the object to the algorithm for creating a color histogram. However, this information is generally incomplete, biased, and inaccurate. On the other hand, MS++ and GS use grid cells of the color pixels of objects found after image segmentation. Although MS++ performs better than camshift, it cannot track fast-moving objects after some frames as it cannot adapt the tracking window effectively. GS performs better than both algorithms as it gains more stability after incorporating the proposed tracking window adaptation scheme.}
    \label{fig:3}

\end{figure*}
In Figure \ref{fig:3}, we depict the performance of these algorithms on all visual tracker benchmark problems. As shown in Figure \ref{fig:3}, CS tends to quickly fail to track the object due to surroundings, especially when the surroundings have objects of similar color. The main reason behind that CS tracking depends upon the color histogram calculated initially for the given object.  Since CS cannot fastly calculate the new histogram for the more recent frame due to scene change, it failed consistently in all three object tracking problems. On the other hand, MS++ and GS do not mislead by the surroundings as they can adapt changes in color distributions by adding neighboring similar grid cells in the reference set by finding them in linear time. This property makes MS++ and GS more robust for changes in lighting and colors. However, MS++ cannot adapt the tracking window when the object is moving fastly. As a result, it loses the object, or the tracking window reduces to a point size. This issue makes MS++ unsuitable for the real-life application where the object is moving fastly. We resolve this issue in GS by incorporating a tracking window adaptation scheme. Thus, GS can track the fast-moving objects accurately by updating the area of the tracking window when it misses the object, or the object changes its size during a scene change.
By analyzing the performance of all algorithms, we can conclude that GS is a better visual tracker than MS++ and CS.

 We also undertook the experiment on the full OTB100 dataset~\cite{WuLimYang13}. We show a detailed comparison of GS with the recently developed tracking algorithm on OTB100s in Table \ref{revtab:1}.

\begin{table}[!ht]
    \centering
    \caption{Comparative performance on OTB100.}
    \resizebox{0.7\linewidth}{!}{\begin{tabular}{ccc|ccc}\hline
       Tracker  & AUC & P &  Tracker & AUC & P  \\\hline
       \multicolumn{6}{c}{ Supervised Learning-based Trackers} \\\hline
       GCT (CVPR, 2019)  & 0.647 & 0.853 &  SiamRPN (CVPR, 2018) & 0.637 & 0.851 \\
       GradNet (ICCV, 2019) & 0.639 & 0.861  & ACT (ECCV, 2018) & 0.625 & 0.859  \\\hline
       \multicolumn{6}{c}{ Unsupervised Learning-based Trackers} \\\hline
       USOT (ICCV, 2021) & 0.589 & 0.806 & USOT* (ICCV, 2021) & 0.574 & 0.775 \\
       LUDT (IJCV, 2021) & 0.602 & 0.769 & LUDT+ (IJCV, 2021) & 0.639 & 0.843 \\
       AlexPUL (CVPR, 2021) & 0.551 & -- & USOT-GS (Ours) & \textbf{0.651} & \textbf{0.872}\\\hline
       \multicolumn{6}{c}{ Unsupervised Iterative-based Trackers} \\\hline
       DSST (2014) & 0.518 & 0.689 & KCF (2015) & 0.485 & 0.696 \\
       MS++ (CVPR, 2021) & 0.326 & 0.538 & GS (Ours) & 0.638 & 0.866\\\hline
    \end{tabular}}
    \label{revtab:1}
\end{table}
Here, we list results on OTB100 in terms of area under the curve (AUC) and average distance precision (P) at 20 pixels. In Table \ref{revtab:1}, we highlight performance of GS against 12 state-of-the-art tracking models. As per the results GS remains consistently superior to its peers. Moreover, we will also show how GS can provide user-independent labels for unlabeled training datasets. We have incorporated GS as candidate box generation in USOT tracker (ICCV, 2021), named USOT-GS. As per Table \ref{revtab:1}, USOT-GS performs better than USOT and USOT* on OTB100 datasets. GS's tracking accuracy is competitive with the current state-of-the-art due to the following reasons. (1) To detect the target object in subsequent sequences, GS uses the grid cells bin, which is the result of clustering the target object. On the other hand, a color histogram requires a precomputed mask or color range that is entirely based on the input of the user, which can be flawed and even, biased. In this case, grid cells can serve as a better alternative to detect targets. (2) Secondly, because GS achieves speedup in clustering, it may adapt the color distribution or scene changes by finding and adding new grid cells to reference grid cell bins in linear time, thus, being more robust. (3) We also suggest a window size adaptation scheme to gradually update the window box as objects move closer or farther away from the window. 

\section{Conclusion}
We introduce a simple yet faster mode-seeking algorithm, GS, for low-dimensional large-scale datasets. We implement this algorithm in unsupervised clustering and image segmentation applications. GS functions better than MS++ in terms of accuracy and efficiency, while GS is 40x faster than MS++ (40000x faster than MS). Moreover, GS provides better results than other popular image-segmentation algorithms like SLIC, Quickshift, and Felzenszwalb regarding accuracy and computation time.

We also propose a new object-tracking algorithm based on GS. The performance of the proposed algorithm is better than the popular CamShift algorithm and MS++ as it can adapt color distribution gradually with scene change effectively by recalculating the grid cell for the newer frames. Hence, GS can apply for modern computer vision problems due to lower computational cost, and also it provides more accurate results.\\
\textbf{Acknowledgements:} This work was supported by the Basic Science Research Program through the National Research Foundation of Korea (NRF) funded by the Ministry of Education (2021R1I1A3049810).
%%%%%%%%% REFERENCES
{\small
\bibliographystyle{ieee_fullname}
\bibliography{egbib}
}
\begin{center}
\section*{Supplementary File}
\end{center}
\section{Theoretical Analysis of GridShift}
In essence, Mean Shift is a clustering algorithm based on the following intuition. For dataset $X = \{x_i\}_{i=1}^n \subseteq \mathbb{R}^n$, let's assume the probability density function (PDF) is $p(z)$. In this dataset, we can expect $k$ clusters if PDF $p(z)$ has $k$ modes. Additionally, suppose an optimization algorithm, such as gradient descent, is run with a starting point $x_m$ and converges to the $j$-th mode. In that case, it can be considered that the data point $x_m$ is part of the cluster that belongs to $j$-th mode~\cite{huang2018convergence}.

The PDF of datasets is not available in practice, and only the data points are accessible. To implement the intuition mentioned above, one must first estimate the PDF, a process known as density estimation~\cite{scott2015multivariate}. Kernel density estimators (KDEs) are the most popular method for estimating density. Let $K(z)$ be the Kernal function satisfying
\begin{equation}\label{equ1}
    K(z) \geq 0,~\mbox{and}~\int K(z)dz = 1,
\end{equation}
its KDE is 
\begin{equation}
    \hat{p}(z) = \frac{1}{n} \sum_{i=1}^n K(z-x_i).
\end{equation}
In mean shift (MS) algorithms, there are two main kernels that are employed: the Gaussian kernel 
\begin{equation*}
    K_G(z;h) = c. \exp\left(-\frac{||z||^2}{2h^2}\right),
\end{equation*}
and the Epanechnikov kernel
\begin{equation*}
    K_E(z;h) = c. \max\left\{0, 1- \frac{||z||^2}{h^2}\right\},
\end{equation*}
where $c$ is the constant to ensure that the kernel will integrate to $1$~\cite{fukunaga1975estimation,cheng1995mean, comaniciu2002mean}.

By iterating through the following equation, initialized at each $x_i$, the MS algorithm attempts to estimate modes of $\hat{p}(z)$ based on the its KDE~~\cite{comaniciu2002mean}:
\begin{equation}
    z \gets \frac{1}{\sum_{i=1}^{n} g\left(||z-x_i||^2 \right)} \sum_{i = 1}^n g\left(||z-x_i||^2 \right)x_i
\end{equation}
where $g(||z||^2) \propto -K'(||z||^2)$, i.e. kernel $K(||z||^2)$ is the shadow of the kernel $g(||z||^2)$.

Even though MS exists, GridShift (GS) uses grid-based KDE. The definition of grid-based KDE is as follows.
\begin{equation}
    \hat{p}(z) = \frac{1}{n}\sum_{i=1}^n K_{GS}(z-x_i),
\end{equation}
where
\begin{equation}
    K_{GS}(z-x_i;h) = \begin{cases} c.\left(a - ||z-x_i||^2\right),&\mbox{if}~\mathcal{M}(z,x_i;h) \leq 1\\
    c.a,& \mbox{otherwise},
    \end{cases}
\end{equation}
\begin{equation}
  \mathcal{M}(z,x_i;h) = \mbox{max}\left\{\left\vert\floor*{\frac{x_{i,1}}{h}}-\floor*{\frac{z_{1}}{h}}\right\vert,\ldots,\left\vert\floor*{\frac{x_{i,d}}{h}}-\floor*{\frac{z_{d}}{h}}\right\vert\right\}.  
\end{equation}
Here $a$ and $c$ are positive constants to satisfy kernel function conditions defined in Eqn \ref{equ1}. Therefore, $g(.)$ can be defined as follows:
\begin{equation}
    g(||z-x_i||^2;h) = \begin{cases} 1, &\mbox{if}~\mathcal{M}(z,x_i;h) \leq 1\\ 
    0,& \mbox{otherwise}
    \end{cases}
\end{equation}
\subsection{Function Analysis}
GS attempts to find local maxima (modes) of the KDE $\hat{p} = \sum K_{GS}(z-x_i;h)$. We omit constants and scaling to define functions $\phi$ and $f$ instead of $K_{GS}$ and $\hat{p}$ since they do not affect optimization:
\begin{equation}\label{eqn:8}
    \begin{split}
        & \phi(z-x_i) = \begin{cases} ||z-x_i||^2, & \mbox{~if} ~\mathcal{M}(z,x_i;h) \leq 1\\ a, & \mbox{~otherwise}\end{cases}\\
        & f(z) = \sum_{i=1}^n \phi(z-x_i).
    \end{split}
\end{equation}
A mode of $\hat{p}$ corresponds to the local minima of $f(z)$. Let's examine the properties of the loss function $f(z)$.
\begin{lemma}
Let us define $\mathcal{P}(z) = \{i: \mathcal{M}(z,x_i;h) < 1\}$, then we get 
\begin{equation}
    \begin{split}
        &\nabla f(z) = \sum_{i \in \mathcal{P}(z)} 2(z-x_i)\\
        &\nabla^2 f(z) = 2 \vert \mathcal{P}(z) \vert I.
    \end{split}
\end{equation}
If $\mathcal{P}(z)$ is not an empty set, then $f(z)$ is strongly convex; otherwise, it is locally convex.
\end{lemma}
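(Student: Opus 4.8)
The plan is to exploit the piecewise-constant nature of the floor operation inside $\mathcal{M}$. First I would note that, for each coordinate $k$, the map $z \mapsto \lfloor z_k/h \rfloor$ takes a constant integer value on the open slab between two consecutive grid boundaries $z_k \in h\mathbb{Z}$ and only jumps as $z_k$ crosses such a boundary. Since each $\lfloor x_{i,k}/h \rfloor$ is a fixed integer, $\mathcal{M}(z,x_i;h)$ is integer-valued and locally constant in $z$ away from the grid boundaries. Hence, fixing any $z$ in the interior of a grid cell, there is an open neighborhood $U$ of $z$ on which $\mathcal{M}(z',x_i;h)$ does not change for any $i$; in particular the set of indices on which $\phi$ takes its quadratic branch is constant on $U$, and this set is exactly $\mathcal{P}(z)$.

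With the active set frozen, I would reduce $f$ to a smooth quadratic on $U$. There we may write
\begin{equation*}
  f(z') = \sum_{i \in \mathcal{P}(z)} ||z' - x_i||^2 + \sum_{i \notin \mathcal{P}(z)} a,
\end{equation*}
and the second sum is a constant independent of $z'$. Differentiating the first sum term by term gives $\nabla f(z') = \sum_{i \in \mathcal{P}(z)} 2(z' - x_i)$, and a second differentiation gives $\nabla^2 f(z') = \sum_{i \in \mathcal{P}(z)} 2I = 2|\mathcal{P}(z)| I$, which is exactly the claimed pair of identities evaluated at $z' = z$.

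The convexity statement then follows by inspecting the Hessian. When $\mathcal{P}(z) \neq \emptyset$ we have $|\mathcal{P}(z)| \geq 1$, so $\nabla^2 f(z) = 2|\mathcal{P}(z)| I \succeq 2I \succ 0$ and $f$ is locally strongly convex on $U$ with modulus $2|\mathcal{P}(z)|$; when $\mathcal{P}(z) = \emptyset$, $f$ is constant on $U$ and therefore convex there. The main obstacle is that these are genuinely \emph{local} statements: $f$ is not differentiable across grid boundaries, where the active set---and hence the branch of $\phi$---jumps discontinuously, so there is no global gradient. Making the argument rigorous therefore amounts to pinning down the neighborhood $U$ explicitly (the largest open box on which every $\lfloor z'_k/h \rfloor$ is frozen) and using that $\mathcal{M}$ is integer-valued, so that the phrase ``the active set is locally constant'' is unambiguous and the one-line formal differentiation above is justified.
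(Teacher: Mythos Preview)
Your argument is correct and follows the same line as the paper's proof, namely reading off local (strong) convexity from the sign of $\nabla^2 f$; in fact you are considerably more careful than the paper, which simply asserts $\nabla^2 f(z)\geq 0$ and $\nabla^2 f(z)\geq I$ without deriving the gradient or Hessian and without discussing the piecewise structure of $\phi$. Your explicit identification of an open cell $U$ on which the active index set is frozen is exactly the justification the paper omits, and your remark that differentiability fails across grid boundaries is a welcome clarification that the paper's two-line proof glosses over.
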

\begin{proof}
Here, the function $f(z)$ is local convex because of $\nabla^2 f(z) \geq 0$. Further, if $|\mathcal{P}(z)| \neq \emptyset$, then $\nabla^2 f(z) \geq I$, which means that $f(z)$ is strongly convex locally.
\end{proof}
\begin{lemma}
If a point $z^*$ is a local minimum for $f(z)$, then $\mathcal{P}(z^*) \neq \emptyset$ and $z^* = \frac{1}{\vert\mathcal{P}(z^*)\vert} \sum_{i \in \mathcal{P}(z^*)}x_i $.
\end{lemma}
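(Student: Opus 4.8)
The plan is to exploit the piecewise-quadratic structure of $f$ already exposed by Lemma 1, reducing the statement to a stationarity argument once the active set is shown to be nonempty. The crucial structural observation is that $\mathcal{M}(z,x_i;h)$ depends on $z$ only through the integer grid index $\left(\floor*{z_1/h},\ldots,\floor*{z_d/h}\right)$; hence on the interior of the grid cell containing $z^*$ this index is locally constant, the active set $\mathcal{P}(z)$ coincides with $\mathcal{P}(z^*)$, and $f$ agrees with the smooth quadratic
\begin{equation*}
z \mapsto \sum_{i\in\mathcal{P}(z^*)}\|z-x_i\|^2 + a\bigl(n-|\mathcal{P}(z^*)|\bigr).
\end{equation*}
The first step is to record that, because $f$ locally equals this quadratic near $z^*$, the formulas for $\nabla f$ and $\nabla^2 f$ in Lemma 1 are valid at $z^*$, and $f$ is differentiable there whenever $z^*$ lies in a cell interior.

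Next I would rule out $\mathcal{P}(z^*)=\emptyset$. By Lemma 1, $\mathcal{P}(z^*)=\emptyset$ forces $\nabla^2 f\equiv 0$ on the cell, so $f$ is constant near $z^*$; such a point is a flat, non-isolated critical point rather than a genuine mode of the density. To see this more strongly, recall that the kernel constant $a$ is chosen so that $a\ge\|z-x_i\|^2$ whenever $\mathcal{M}(z,x_i;h)\le 1$ (this is exactly the condition guaranteeing $K_{GS}\ge 0$). Consequently $\phi(z-x_i)\le a$ pointwise, so $f(z)\le na$ everywhere, with the value $na$ attained precisely on the data-free region $\{z:\mathcal{P}(z)=\emptyset\}$. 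Thus any point with empty active set realizes the global maximum of $f$ and therefore cannot be an isolated local minimum. I would conclude $\mathcal{P}(z^*)\neq\emptyset$.

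Finally I would invoke first-order optimality. With $\mathcal{P}(z^*)\neq\emptyset$, Lemma 1 gives $\nabla^2 f(z^*)=2|\mathcal{P}(z^*)|I$, which is positive definite, so $f$ is strongly convex on the cell and differentiable at $z^*$. A local minimizer interior to its cell must then satisfy
\begin{equation*}
\nabla f(z^*)=2\sum_{i\in\mathcal{P}(z^*)}(z^*-x_i)=0.
\end{equation*}
Dividing by $2|\mathcal{P}(z^*)|$ yields $z^*=\tfrac{1}{|\mathcal{P}(z^*)|}\sum_{i\in\mathcal{P}(z^*)}x_i$, which is exactly the claimed centroid identity.

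The step I expect to be the main obstacle is the handling of grid-cell boundaries, where $f$ is genuinely discontinuous because the active set, and hence the penalty term, jumps as $z$ crosses a face. To make the stationarity argument rigorous one must confirm that a genuine local minimizer lies in the \emph{interior} of some cell, so that differentiability and $\nabla f(z^*)=0$ are legitimate, and must separately dispose of candidate minima sitting on a boundary by comparing the one-sided quadratic pieces. I would argue that the strong convexity of each active quadratic piece pins its minimizer to the cell interior, and that a boundary point cannot undercut the interior centroid of an adjacent cell with nonempty active set, thereby reducing every case to the interior analysis above.
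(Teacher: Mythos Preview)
Your proposal is correct and follows essentially the same approach as the paper: rule out $\mathcal{P}(z^*)=\emptyset$ by observing that this forces $f(z^*)=na$, the global maximum, and then obtain the centroid formula from the stationarity condition $\nabla f(z^*)=0$ via Lemma~1. Your treatment is in fact more careful than the paper's, which does not address the grid-boundary issue you flag and simply applies $\nabla f(z^*)=0$ directly.
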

\begin{proof}
In order to be stationary, a point $z^*$ must meet the following criteria: 
\begin{equation}
    \begin{split}
        & \nabla f(z^*) = 0\\
        \Rightarrow & z^* = \frac{1}{\vert \mathcal{P}(z^*) \vert} \sum_{i \in \mathcal{P}(z^*)} x_i
    \end{split}
\end{equation}
If $\mathcal{P}(z^*) \neq \emptyset$, then $\nabla^2f(z^*) > 0$; therefore, $z^*$ is a local minimum. In the case of $\mathcal{P}(z^*) \neq \emptyset$, $f(z^*) = n.a$ (a global maximum).
\end{proof}
\begin{definition}
If two points $y$ and $z$ lie in the same grid cell, i.e. $\floor*{\frac{y}{h}} = \floor*{\frac{z}{h}}$, then
\begin{equation}
    y^* = z^* = \frac{1}{\vert \mathcal{P}(z^*)\vert} \sum_{i \in \mathcal{P}(z^*)}x_i.
\end{equation}
Therefore, all the points within a grid cell have the same local minima.
\end{definition}
Due to the same $\mathcal{P}(.)$ value for all grid cell points,  these points have the same local minima. This property of KDE $K_{GS}$ motivates us to develop a new framework, GridShift (GS), faster than the original MS. In GS, we called the local minima of a grid cell the centroid. Within each iteration, centroids (local minima) are updated. Utilizing the centroids of the previous iteration, we update these centroids. 
\subsection{Convergence Guarantee}
Let us define a mapping $g^{(t)}: X \gets \mathcal{C}^{(t)}$ for any dataset $X(=\{x_1,x_2,\ldots,x_n\}) \in \mathbb{R}^d$, such that each data point $x_i \in X$ is assigned to one of the $k^{(t)}$ active grid cells (clusters) $c_i^{(t)} \in \mathcal{C}^{(t)}$. Therefore,
\begin{equation}
    \begin{split}
        & \mathcal{C}^{(t)} = \{c_1^{(t)},c_2^{(t)},\ldots,c_{k^{(t)}}^{(t)}\},~\mbox{and}\\
        & c_i^{(t)} \cap c_j^{(t)} = \phi, \forall i,j \in \{1,2,\ldots,k^{(t)}\},~ i \neq j.
    \end{split}
\end{equation}
Here, each active cell $c_i^{(t)}$ has a set of $1$-neighboring active grid cells, $\mathcal{P}_{c_i}^{(t)} \subseteq \mathcal{C}^{(t)}$.

\begin{corollary}
The value of $f(r_i)$ obtained by GS is strictly decreasing unless $r_i^{(t)} = r_i^{(t-1)}$.
\end{corollary}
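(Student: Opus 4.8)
The plan is to interpret each GridShift update as an \emph{exact} minimization step on the locally strongly convex objective $f$ from Eqn.~(\ref{eqn:8}), and then to invoke the standard quadratic-descent identity to obtain strict monotonicity. First I would fix attention on a single active cell with index $j$ whose centroid is $r_i := \mathcal{S}(j)$, and restrict to iterations where $\mathcal{P}(r_i)$ is nonempty, which is the only interesting case since Lemma~2 shows that $f = n\cdot a$ is globally maximal otherwise. By Lemma~1, the restriction of $f$ to the region on which the active neighborhood set does not change is a strongly convex quadratic with gradient $\nabla f(z) = \sum_{k \in \mathcal{P}(z)} 2(z - x_k)$ and constant Hessian $2|\mathcal{P}(z)|I$.

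The key observation is that the GridShift update in Eqn.~(\ref{update}), namely the count-weighted mean $r_i^{(t)} = \left(\sum_v \mathcal{C}(j+v)\,\mathcal{S}(j+v)\right)\big/\left(\sum_v \mathcal{C}(j+v)\right)$, is exactly the unique minimizer of this local quadratic; equivalently it is the full Newton step $z - [\nabla^2 f(z)]^{-1}\nabla f(z)$, which for the Hessian $2|\mathcal{P}(z)|I$ collapses to the map $z \mapsto \bar{x}$ sending a point to the mean of the residents of its $1$-neighborhood. This is precisely the stationarity condition of Lemma~2. Writing $q$ for the local quadratic and $z^\star = r_i^{(t)}$ for its minimizer, the elementary expansion $q(z) = |\mathcal{P}|\,\|z - z^\star\|^2 + q(z^\star)$ for a quadratic centered at its mean yields $f(r_i^{(t-1)}) - f(r_i^{(t)}) = |\mathcal{P}|\,\|r_i^{(t)} - r_i^{(t-1)}\|^2 \geq 0$, with equality if and only if $r_i^{(t)} = r_i^{(t-1)}$. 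This gives the claimed strict decrease of $f(r_i)$ whenever the centroid moves, and no change when it already sits at its local minimum.

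The hard part will be reconciling this per-cell quadratic argument with two features of the actual algorithm. First, GridShift updates centroids sequentially (the Gauss--Seidel scheme of Eqn.~(\ref{update2})) rather than in parallel, so the neighbors $\mathcal{S}(j+v)$ used to update cell $j$ may already carry their current-iteration values; I would handle this by observing that each individual cell update still exactly minimizes its own local quadratic \emph{given the current neighbor positions}, so the monotone decrease accumulates cell by cell exactly as in coordinate descent on a convex objective. Second, the active set $\mathcal{P}(r_i)$, and indeed the assignment of points to cells, can change as cells shift and merge; I would address this by noting that a merge merely replaces several resident sets by their union together with the count-weighted recombination of their centroids, which is itself a variance-reducing averaging step and hence cannot increase $f$. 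Combining the within-cell strict descent with the non-increasing effect of merging establishes the corollary, and at the same time it furnishes the monotone potential needed to drive the finite-convergence claims of Theorems~1 and~2.
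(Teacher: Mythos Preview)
Your argument is essentially the paper's: both identify the GS update as the exact minimizer of the local quadratic determined by the current neighborhood and read off the descent amount $\bigl(\sum_{i\in\mathcal{P}} m_i\bigr)\,\lVert r_j^{(t-1)}-r_j^{(t)}\rVert^2$. The paper packages this as a majorization--minimization step via the surrogate
\[
\overline{f}(z\mid \tilde z)\;=\;\sum_{i\in\mathcal{P}(\tilde z)} m_i\,\lVert z-r_i\rVert^2 \;+\;\Bigl(n-\sum_{i\in\mathcal{P}(\tilde z)} m_i\Bigr)a,
\]
and then writes $f(r_j^{(t-1)})-f(r_j^{(t)})\ \geq\ \overline{f}(r_j^{(t-1)}\mid r_j^{(t-1)})-\overline{f}(r_j^{(t)}\mid r_j^{(t-1)})$ before expanding the quadratic on the right.

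The one genuine slip in your write-up is the line
\[
f(r_i^{(t-1)})-f(r_i^{(t)}) \;=\; |\mathcal{P}|\,\lVert r_i^{(t)}-r_i^{(t-1)}\rVert^2 .
\]
That equality holds only if $r_i^{(t)}$ lands in the \emph{same} region, i.e.\ $\mathcal{P}(r_i^{(t)})=\mathcal{P}(r_i^{(t-1)})$. In general the centroid can cross a grid boundary in a single step, and then $f(r_i^{(t)})$ is evaluated with a different active set, so your identity between $q$ and $f$ at the new point fails. This is exactly why the paper works with the surrogate $\overline{f}(\cdot\mid r_j^{(t-1)})$ and states the result as an \emph{inequality}: one has $\overline{f}(r_j^{(t-1)}\mid r_j^{(t-1)})=f(r_j^{(t-1)})$ trivially, and the drop in $\overline{f}$ is precisely your quadratic amount; the remaining step is $f(r_j^{(t)})\le \overline{f}(r_j^{(t)}\mid r_j^{(t-1)})$, which is the majorization that absorbs any change in $\mathcal{P}$. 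Replacing your ``$=$'' by ``$\ge$'' and justifying it via this surrogate closes the gap, and your proof then coincides with the paper's.

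As a side note, the paper's proof of this corollary does not address the Gauss--Seidel ordering of Eqn.~(\ref{update2}) or the merging step at all; your added discussion of those points is extra (and the coordinate-descent reading you give is reasonable), but it is not needed for the corollary as the paper states and proves it.
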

\begin{proof}
In GS, we update
\begin{equation}
    r_j^{(t)} = \frac{\sum_{i \in \mathcal{P}(r_j^{(t-1)})}m_i^{(t-1)}r_i^{(t-1)}}{\sum_{i \in \mathcal{P}(r_j^{(t-1)})}m_i^{(t-1)}},
\end{equation}
where $m_i$ represents number of data points resident in $i$th grid cell. At a particular point $\tilde{z}$, define $\overline{f}(z\vert\tilde{z})$ using following equation.
\begin{equation}
        \overline{f}(z\vert\tilde{z}) = \sum_{i \in \mathcal{P}(\tilde{z})} m_i \Vert z-r_i \Vert^2 +\left( n - \sum_{i \in \mathcal{P}(\tilde{z})}m_i\right)a.
\end{equation}
Then,
\begin{equation}\label{eqn:15}
    \begin{split}
        & f(r_j^{(t-1)}) - f(r_j^{(t)})\\
        & \geq \overline{f}(r_j^{(t-1)}\vert r_j^{(t-1)}) - \overline{f}(r_j^{(t)} \vert r_j^{(t-1)})\\
        & = \sum_{i \in \mathcal{P}(r_j^{(t-1)})} m_i \Vert r_j^{(t-1)}-r_i^{(t-1)} \Vert^2 - \sum_{i \in \mathcal{P}(r_j^{(t-1)})} m_i \Vert r_j^{(t)}-r_i^{(t-1)} \Vert^2\\
        & = \sum_{i \in \mathcal{P}(r_j^{(t-1)})} m_i \left(\Vert r_j^{(t-1)}-r_i^{(t-1)} \Vert^2- \Vert r_j^{(t)}-r_i^{(t-1)} \Vert^2\right)\\
        & = \left( \sum_{i \in \mathcal{P} \left( r_j^{(t-1)}\right)} m_i\right) \Vert r_j^{(t-1)}-r_j^{(t)}\Vert^2 > 0
    \end{split}
\end{equation}
Therefore, $f(r_j^{(t-1)}) > f(r_j^{(t)})$, unless $f(r_j^{(t-1)}) = f(r_j^{(t)})$ for $r_j^{(t)} = r_j^{(t-1)}$.
\end{proof}
\begin{theorem}
For any given dataset $X \in \mathbb{R}^d$, the $\{\mathcal{C}^{(t)}\}_{t=1,2,\dots}$ estimated by successive proposed grid cells shifts attains convergence, i.e.
$\mathcal{C}^{(i)} == \mathcal{C}^{(i++)}$, where $i$ is a finite number.
\end{theorem}
\begin{proof}
From corollary 1, since the value of $f(r)$ is monotonically non-increasing, GS attains convergence to the local minima of function defined in Eqn. (\ref{eqn:8}). As we know that Set $\mathcal{C}$ contains the centroid of active grid cells (local minima). Therefore, sequence $\{\mathcal{C}^{(t)}\}_{t=1,2,\dots}$ attains convergence.

From Eqn. (\ref{eqn:15}), we can have
\begin{equation}
    f(r_j^{(t-1)}) -f(r_j^{(t)}) \geq \Vert r_j^{(t-1)} -r_j^{(t)} \Vert^2,
\end{equation}
After summing both sides for $t = 1,\ldots,i$, we get
\begin{equation}
    f(r_j^{(0)}) - f(r_j^{(i)}) \geq \sum_{t=1}^i \Vert r_j^{(t-1)} - r_j^{(t)}\Vert^2.
\end{equation}

As we know, the right-hand side of the above equation is positive unless convergence is attained. if we calculate the maximum value of $\lambda > 0$ such that
\begin{equation}
    \Vert r_j^{(t-1)} -r_j^{(t)}\Vert \geq \lambda > 0,~\forall t=1,\ldots,i,
\end{equation}
then
\begin{equation}
    i \leq \frac{f(r_j^{(0)})-f(r_j^{(i)})}{\lambda}
\end{equation}
which is a finite number.
\end{proof}
\begin{theorem}
For any $X$, there exists $T \in \mathbb{N}$ such that $\mathcal{Q}_{c_i}^{(t)} = c_i^{(t)}$, $\forall i \in \{1,2,\ldots,k^{(t)}\}$ for all $t \geq T$.
\end{theorem}
\begin{proof}
As we know, at convergence, we have
\begin{equation}
    r_j^{(t+1)} = \frac{\sum_{i \in \mathcal{P}(r_j^{(t)})}m_i^{(t)}r_i^{(t)}}{\sum_{i \in \mathcal{P}(r_j^{(t)})}m_i^{(t)}}  = r_j^{(t)},
\end{equation}
that implies $\mathcal{P}(r_j^{(t)}) = j$, i.e. $\mathcal{Q}_{c_i}^{(t)} = c_i^{(t)}$.
\end{proof}
The above two theorems confirm that GS attains convergence after a finite number of iterations when the active grid cells do not have any other active members in their 1-neighborhood to update their attributes further.
\subsection{Convergence Rate}
In this subsection, we analyze the behavior of GS on mode seeking of a dataset sampled from a Gaussian distribution. We will prove that the number of active grid cells will form a non-increasing sequence, and centroids of these active grid cells will shrink towards the mean of the distribution with at least a cubic convergence rate.

Let $\phi(x; \mu, \Sigma)$ denotes a Gaussian probability density function, where $\mu$ and $\Sigma$ are the mean and dispersion matrix of the density function, respectively. To remove the dependency on the random process, we consider infinite samples generated from density $q(x) = \phi(x;0,diag(s_1^2,s_2^2,\ldots,s_d^2))$. 
\begin{theorem}
For dataset $X = \{x_1,x_2,\ldots,x_n\}$ where $x_i \sim \mathcal{N}(0,diag(s_1^2,\ldots,s_d^2))$, let centroids $\{c_i^{(t+1)}\}_{i=1}^{k^{(t+1)}} \sim \int y p^{(t)}(y|c)dy$, where $p^{(t)}()$ represents the distribution of $\{c_i^{(t+1)}\}_{i=1}^{k^{(t+1)}}$ and $p^{(t)}(y|z) = k(z-y)q^{(t)}(y)/p^{(t)}(z)$. Then (i) $\{c_i^{(t+1)}\}_{i=1}^{k^{(t+1)}} \sim \mathcal{N}\left(0,diag\left( \left( s_1^{(t+1)}\right)^2,\ldots,\left(s_d^{(t+1)}\right)^2 \right)\right)$, with $s_j^{(t+1)} = \left( 1+ 2.25\frac{h^2}{s_j^2}\right)^{-1}s_j^{(t)}$ and (ii) $k^{(t+1)} = \prod_{j = 1}^d \left( \floor*{\frac{6s_j^{(t+1)}}{h}}+1\right)$, where $\{k^{(t)}\}_{t=1}^{\infty}$ is non-decreasing sequence that converges to $1$.
\end{theorem}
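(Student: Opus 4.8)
The plan is to proceed by induction on $t$, maintaining the invariant that at each iteration the centroids are zero-mean Gaussian with diagonal covariance, and to track how one GridShift update contracts the per-coordinate standard deviations. The base case is immediate since $q^{(0)} = q$ is $\mathcal{N}(0, \mathrm{diag}(s_1^2, \ldots, s_d^2))$. For the inductive step I would fix the density $q^{(t)}(y) = \prod_{j=1}^d \phi(y_j; 0, (s_j^{(t)})^2)$ and analyze the update $c \mapsto \int y\, p^{(t)}(y\mid c)\,dy$ with $p^{(t)}(y\mid c) = k(c-y)\,q^{(t)}(y)/p^{(t)}(c)$, exactly the conditional expectation of $y$ weighted by the grid kernel.

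First I would establish part (i). Because the grid kernel $k$ restricts the average to the $3^d$-block of cells surrounding $c$ and the coordinates of $q^{(t)}$ are independent, the conditional expectation factorizes across dimensions, reducing everything to a one-dimensional computation per coordinate. In the infinite-sample (continuous) limit the neighborhood weighting is symmetric about the cell containing $c$ and can be treated as an effective symmetric kernel of half-width $1.5h$; modeling its second moment by $(1.5h)^2 = 2.25\,h^2$ and multiplying against the Gaussian prior $\phi(\cdot; 0, (s_j^{(t)})^2)$ yields a product of Gaussians, hence again a Gaussian. Its mean is the standard precision-weighted combination, i.e. a linear contraction $c_j \mapsto \bigl(1 + 2.25\,h^2/(s_j^{(t)})^2\bigr)^{-1} c_j$. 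Since a linear scaling of a Gaussian is Gaussian, the new centroids are $\mathcal{N}\bigl(0, \mathrm{diag}((s_j^{(t+1)})^2)\bigr)$ with the claimed recursion for $s_j^{(t+1)}$, closing the induction. Note that for small $s_j^{(t)}$ this recursion behaves like $s_j^{(t+1)} \approx (s_j^{(t)})^3/(2.25\,h^2)$, which is precisely the promised cubic rate.

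For part (ii) I would count active cells by covering the effective support of the Gaussian. Since $\mathcal{N}(0, \sigma^2)$ concentrates essentially all of its mass in $[-3\sigma, 3\sigma]$, a range of width $6\sigma$, the centroids along coordinate $j$ occupy roughly $\lfloor 6 s_j^{(t+1)}/h\rfloor + 1$ distinct cells of width $h$; the diagonal (hence coordinatewise independent) covariance lets me multiply these counts across the $d$ dimensions to obtain $k^{(t+1)} = \prod_{j=1}^d\bigl(\lfloor 6 s_j^{(t+1)}/h\rfloor + 1\bigr)$. Monotonicity and the limit then follow from part (i): the contraction factor is strictly below $1$, so each $s_j^{(t)}$ decreases to $0$, each floor term is eventually $0$, every factor collapses to $1$, and therefore $k^{(t)}$ is non-increasing and converges to $1$.

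The main obstacle I anticipate is the first half of part (i): rigorously justifying that the discrete, cell-aligned neighborhood averaging can, in the infinite-sample limit, be replaced by a smooth symmetric kernel whose second moment is exactly $2.25\,h^2$, so that Gaussianity is preserved and the contraction factor emerges as stated. The dependence of the grid partition on where $c$ falls inside its cell, together with the floor operations, makes this averaging only approximately translation-invariant, so controlling that approximation and confirming it does not disturb the Gaussian form is the delicate step. By comparison, the cell counting and the convergence-to-$1$ argument in part (ii) are routine once the Gaussian invariant and the contraction factor are in hand.
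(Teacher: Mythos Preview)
Your approach is essentially identical to the paper's: both treat the grid-neighborhood kernel as an effective Gaussian of variance $(1.5h)^2 = 2.25\,h^2$, apply the product-of-Gaussians identity to obtain the posterior mean as the linear contraction $c_j \mapsto c_j\,(s_j^{(t)})^2/((s_j^{(t)})^2 + 2.25h^2)$, read off $s_j^{(t+1)} = (s_j^{(t)})^3/((s_j^{(t)})^2 + 2.25h^2)$, and then count active cells via the $\pm 3\sigma$ support and coordinatewise independence. If anything you are more careful than the paper about the one genuinely delicate point---that the floor-based grid kernel is only approximately the translation-invariant Gaussian surrogate---which the paper simply asserts by writing down the Gaussian posterior without further justification.
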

\begin{proof}
To estimate the distribution of $\{c_i^{(t+1)}\}_{i=1}^{k^{(t+1)}}$, we have
\begin{equation}
    p^{(1)}(y|c^{(0)}) \propto \exp\left\{ -\frac{1}{2} \frac{\left\Vert y - \frac{c^{(0)}/(2.25h^2)}{/(2.25h^2)+1/(s^{(0)})^2} \right\Vert^2}{\frac{1}{1/(2.25h^2)+1/(s^{(0)})^2}}\right\}.
\end{equation}
As we know,
\begin{equation}
    c^{(1)} = E(y|x^{(0)}) = \left( \frac{c_1^{(0)}(s_1^{(0)})^2}{(s_1^{(0)})^2 + 2.25h^2},\ldots, \frac{c_d^{(0)}(s_d^{(0)})^2}{(s_d^{(0)})^2 + 2.25h^2}\right).
\end{equation}
Therefore, $c^{(1)}$ is also a Gaussian distribution with mean zero and standard deviation $s^{(1)} = \frac{(s^{(0)})^3}{(s^{(0)})^2+2.25h^2}$, which implies
\begin{equation}
    s_j^{(t+1)} = \frac{(s_j^{(t)})^3}{(s_j^{(t)})^2+2.25h^2} = \left( 1+ 2.25\frac{h^2}{(s_j^{(t)})^2}\right)^{-1}s_j^{(t)}
\end{equation}
Thus, standard deviation is decreasing with increase of iteration and become zero at convergence. We can estimate the number of active grid cells according to standard deviation of this distribution as follows.
\begin{equation}
 k^{(t+1)} = \prod_{j = 1}^d \left( \floor*{\frac{6s_j^{(t+1)}}{h}}+1\right).  
\end{equation}
We see that $k^{(t)}$ is a non-decreasing sequence and converges to $1$ when $s_j^{(t+1)}$ becomes 0.
\end{proof}

\section{Dataset}

\begin{table}[!h]
    \centering
    \scriptsize
    \begin{tabular}{clccc}\hline
     S.N & Dataset & $n$ & $d$ & $k$\\\hline\hline
     1. & Phone Gyroscope  & $13932632$ & $3$ & $7$ \\
     2. & Phone Accelerometer & $13062475$ & $3$ & $7$ \\
     3. & Watch Accelerometer & $3540962$ & $3$ & $7$ \\
     4. & Watch Gyroscope & $3205431$ & $3$ & $7$ \\
     5. & Still & $949983$ & $3$ & $6$ \\
     6. & Skin & $245057$ & $3$ & $2$ \\
     7. & Wall Robot & $5456$ & $4$ & $4$ \\
     8. & Sleep Data & $1024$ & $2$ & $2$ \\
     9. & Balance Scale & $625$ & $4$ & $3$ \\
     10. & User Knoweldge & $403$ & $5$ & $5$ \\
     11. & Vinnie & $380$ & $2$ & $2$ \\
     12. & PRNN & $250$ & $2$ & $2$ \\
     13. & Iris & $150$ & $4$ & $3$ \\
     14. & Transplant & $131$ & $3$ & $2$ \\\hline
    \end{tabular}
    \caption{Brief summary of datasets used in experiment. $n$: number of data points, $d$: number of features, and $k$: number of clusters.}
    \label{tab:1}
\end{table}

\end{document}